\newcommand{\norm}[1]{\|#1\|}
\newcommand{\one}{\mathbbm{1}}
\theoremstyle{plain}
\newtheorem{proposition}{Proposition}
\newtheorem{corollary}{Corollary}
\newtheorem{lemma}{Lemma}
\newtheorem{fact}{Fact}
\newtheorem{definition}{Definition}
\title{KERPLE: Kernelized Relative Positional Embedding for Length Extrapolation}
\author{%
  Ta-Chung Chi$^*$
  \\
  Carnegie Mellon University\\
  \texttt{tachungc@andrew.cmu.edu} \\
  \And
  Ting-Han Fan$^*$ \\
  Princeton University \\
  \texttt{tinghanf@princeton.edu} \\
  \AND
  Peter J. Ramadge \\
  Princeton University\\
  \texttt{ramadge@princeton.edu}\\ 
  \And
  Alexander I. Rudnicky \\
  Carnegie Mellon University \\
  \texttt{air@cs.cmu.edu} \\
}
\begin{document}

\begin{NoHyper}
\def\thefootnote{$^*$}\footnotetext{Equal contribution}
\end{NoHyper}

\maketitle

\begin{abstract}
Relative positional embeddings (RPE) have received considerable attention since RPEs effectively model the relative distance among tokens and enable length extrapolation. We propose KERPLE, a framework that generalizes relative position embedding for extrapolation by kernelizing positional differences. We achieve this goal using conditionally positive definite (CPD) kernels, a class of functions known for generalizing distance metrics. To maintain the inner product interpretation of self-attention, we show that a CPD kernel can be transformed into a PD kernel by adding a constant offset. This offset is implicitly absorbed in the Softmax normalization during self-attention. The diversity of CPD kernels allows us to derive various RPEs that enable length extrapolation in a principled way. Experiments demonstrate that the logarithmic variant achieves excellent extrapolation performance on three large language modeling datasets. Our implementation and pretrained checkpoints are released at~\url{https://github.com/chijames/KERPLE.git}.
\end{abstract}

\section{Introduction}
Transformer-based models have excelled in various natural language processing tasks such as chatbot~\citep{roller2021recipes}, code completion~\citep{chen2021codex}, and paper abstract summarization~\citep{zhang2020pegasus}.
These sequence modeling tasks often require the model to operate well on significantly longer text sequences than the fixed maximum length $L$ used at training time. Training (or retraining) the model using a substantially larger value of $L$ is often infeasible since the transformer training cost is $O(L^2)$. Hence, one desires a transformer that continues to perform well on longer sequences than those used during training; i.e., perform length extrapolation at inference time. Most transformer designs do not have this property \citep{press2022train}. While recent work on absolute positional embeddings demonstrated the extrapolation ability \citep{kiyono2021shiftedAbs,Likhomanenko2021CAPE}, it is believed that relative positional embeddings are more robust to input length change \citep{Likhomanenko2021CAPE}, for example, ALiBi \citep{press2022train} and T5 \citep{raffel2019exploring}. Hence, we are motivated to study the inner workings of relative positional embeddings.

Relative positional embeddings (RPE) encode the idea of shift-invariance: for any shift $p$, $(m+p)-(n+p)=m-n$. It is often added directly to the self-attention matrix before Softmax normalization \citep{chen2021simple}. Inspired by shift-invariance and the ability of a kernel to define a similarity function, there have been studies on shift-invariant kernels for RPE~\citep{wennberg2021case} with a focus on Gaussian kernel. However, in our preliminary experiments, the Gaussian kernel demonstrates limited length extrapolation ability (see Appendix~\ref{appendix:experi_gauss}). Hence, a distinct class of shift-invariant kernels is needed to achieve adequate length extrapolation.

To this end, we note a set of well-established conditionally positive definite (CPD) kernels suitable for modeling distance metrics~\citep{Schplkopf2000cpd}. However, CPD kernels do not conform to an inner product. We can remedy this issue by transforming a CPD kernel into a PD kernel by adding a sufficiently large constant. This constant offset is subsequently absorbed implicitly in the Softmax normalization (see the discussion below Eq.~\eqref{eq:krpe-deriv}).
For example, ALiBi implicitly admits a PD kernel of the form $c-|m-n|$ (see the end of section~\ref{sec:krpe}), which is reduced to a CPD kernel $-|m-n|$. The CPD kernel and Softmax normalization combination opens the door to a sea of possible CPD kernels. We investigate structures from this class that exhibit a strong length extrapolation ability, like ALiBi.

\begin{figure*}[t]
\centering
\caption{\textbf{The 3-Para-Log Variant of Our KERPLE Framework.} $a$, $b$, and $p$ are learnable parameters in each attention head shared across layers. Since \# of heads is $H$, there are $3\cdot H$ learnable parameters. The learnable parameters are trained with length-3 sequences. At the inference time, the last row (in dashed squares) becomes active, and the model extrapolates to length-4 sequences. Note we focus on causal language modeling following ALiBi, so the matrices are triangular.}
\includegraphics[width=0.6\textwidth]{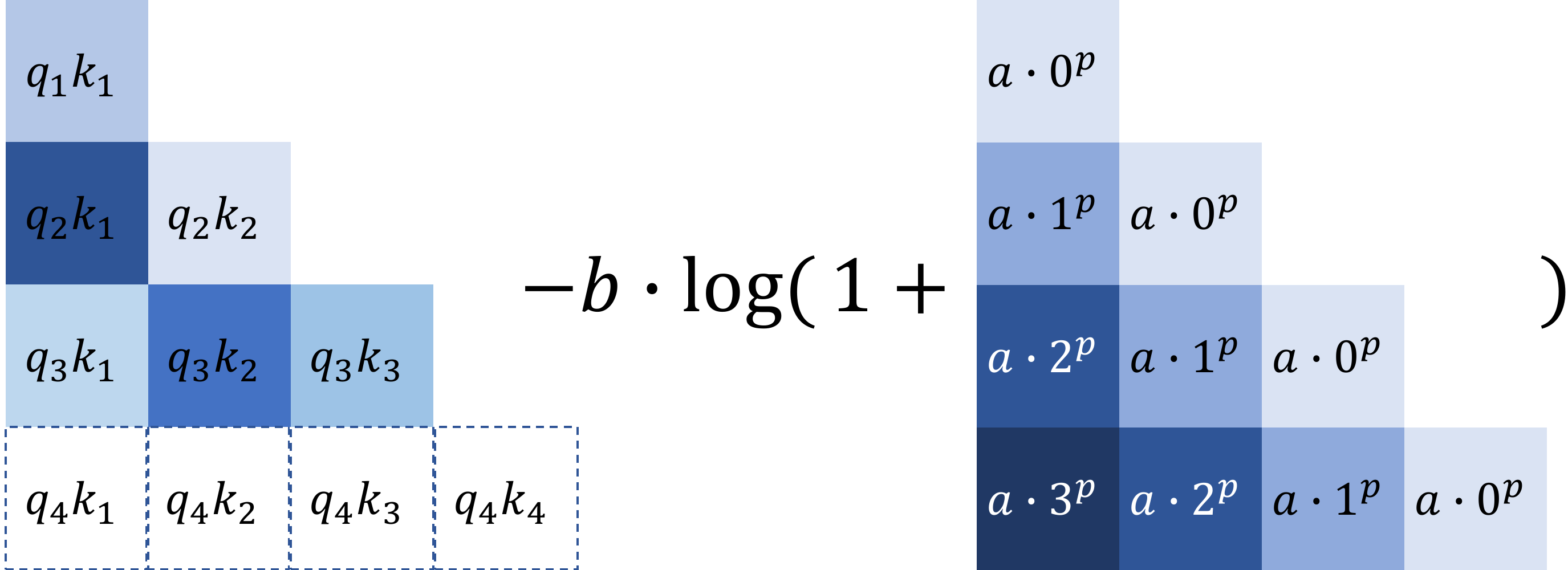}
\label{fig:illustration}
\end{figure*}

Our main result is a framework for \textbf{KE}rnelize \textbf{R}elative \textbf{P}ositional Embedding for \textbf{L}ength \textbf{E}xtrapolation (\textbf{KERPLE}). The framework elucidates key principles that encourage the length extrapolation property. We show that ALiBi is a particular instance within our framework. Our subsequent experiments suggest that the proposed method yields better length extrapolation on large datasets such as OpenWebText2, GitHub, and ArXiv.


\section{Background and Related Work}
\subsection{Preliminary}
Let $\{w_m\}_{m=1}^L$ be the input tokens to a transformer model, where $L$ is the total number of tokens. Each $w_m$ is a scalar and is used to index the embedding vector $\bm e_m \in\mathbb{R}^d$ as the input to the transformer. A transformer converts each $\bm e_m$ into query, key, and value vectors in $\mathbb{R}^d$: $\bm q_m=\bm W_q\bm e_m$, $\bm k_m=\bm W_k \bm e_m$, $\bm v_m=\bm W_v \bm e_m$, where $\bm W_q$, $\bm W_k$, $\bm W_v\in\mathbb{R}^{d\times d}$ are learnable matrices. Then, the self-attention module computes the scaled attention scores and generates the output vector $\bm o_m$ at position $m$ as: $$a_{m,n}=\frac{\exp(\bm q_m^\top \bm k_n/\sqrt{d})}{\sum_{i=1}^L \exp(\bm q_m^\top \bm k_i/\sqrt{d})},\quad \bm o_m = \sum_{n=1}^L a_{m,n}\bm v_n.$$
Since the operation is position-agnostic, it is believed that positional information helps model token interactions \citep{vaswani2017attention}, which we survey in the next subsection.

\subsection{Positional Embedding}
\paragraph{Absolute.} Absolute positional embeddings assign a positional vector $\bm p_m$ to each position $m$ and adds $\bm p_m$ to the embedding vector $\bm e_m$. The very first version of which is the predefined sinusoidal function~\citep{vaswani2017attention}. Followed by the success of BERT~\citep{devlin2018bert}, learnable absolute positional embeddings have been applied to the task of masked language modeling~\citep{devlin2018bert,liu2019roberta,Clark2020ELECTRA,Lan2020albert}, Autoregressive-decoding \citep{radford2018improving,radford2019language}, and sequence-to-sequence~\citep{Gehring2017seq2seq,lewis2019bart} settings. Recent work studied ways to extrapolate sinusoidal positional embeddings to longer sequences by randomly shifting absolute positions during training \citep{kiyono2021shiftedAbs} or augmenting with continuous signals \citep{Likhomanenko2021CAPE}.

\paragraph{Relative.} As opposed to the modeling of absolute position $m$, relative positional embeddings (RPE) that model the positional difference $m-n$ has become popular in the literature~\citep{shaw2018rpe,huang2018music,dai2019transformer,yang2019xlnet,huang2020rpe,he2021deberta,ke2021rethinking,chen2021simple}. In particular, the T5 model that considers bucketed relative distances and log-binning has been shown to perform well on various transformer architectures~\citep{raffel2019exploring}. Rotary positional embedding \citep{su2021roformer} encodes the position with rotations: $f(\bm q_m,m)=R_m\bm q_m$ where $R_m$ is a rotation matrix with angles proportional to $m$. With the rotation's property, the query-key product exhibits a positional difference: $f(\bm q_m,m)^\top f(\bm k_n,n)=\bm q_m^\top R_{n-m}\bm k_n$.

We note that the overview above focuses on the NLP domain. Recent work has applied positional embeddings to other domains such as vision \citep{Kan2021vision} and speech \citep{Likhomanenko2021CAPE}. A survey can be found in \citep{Dufter2022posOverview}. 

\subsection{Kernel and its Application in Transformer}
The kernel trick is a classic approach to generalize the inner product to high dimensional spaces \citep{mika1998kernelsvd,Schplkopf2000cpd,leslie2001kernelsvm,dhillon2004kernelkmeans,takeda2007kernelregress}. In the context of transformers, there has been interest in applying kernels to the self-attention structure to enhance the performance. Examples of such work include kernel for positional embeddings \citep{tsai2019transformer,wu2020transformer,wennberg2021case,luo2021stable}. Another line of research leverages the kernel's feature map \citep{Rahimi2007rndfeature} to linearize the self-attention module and reduce the computational cost \citep{katharopoulos20a,chen2021skyformer,xiong2021nystromformer,peng2021random,choromanski2021performers,qin2022cosformer}. 

\section{Theoretical Foundations of CPD Kernels}
\label{sec:cpd}
\subsection{PD and CPD Kernels}
In this work, we use shift-invariant conditionally positive definite (CPD) kernels to model the effect of relative positional differences. We propose this formulation because the notion of \emph{relative} is modeled by a shift-invariant function: a bivariate function $k$ over two positions $(m,n)$ such that $k(m,n)=f(m-n)$ for some univariate $f$. The notion of~\emph{positional difference} $m-n$ is generalized by the CPD kernel. We review the definitions of PD and CPD kernels below.

\begin{definition}[PD Kernel]
A (real) symmetric function $k:\mathcal{X}\times\mathcal{X}\rightarrow\mathbb{R}$ is a positive definite kernel if for any integer $N$ and any $\{x_i\in\mathcal{X}\}_{i=1}^N$,  $\{c_i\in\mathbb{R}\}_{i=1}^N$, the quadratic form is nonnegative: $\sum_{i=1}^N\sum_{j=1}^Nc_ic_jk(x_i,x_j)\geq 0$.
\label{def:kernel}
\end{definition}

\begin{definition}[CPD Kernel]
A (real) symmetric function $\tilde{k}:\mathcal{X}\times\mathcal{X}\rightarrow\mathbb{R}$ is a conditionally positive definite kernel if for any integer $N$ and any $\{x_i\in\mathcal{X}\}_{i=1}^N$, the quadratic form is conditionally nonnegative: $\sum_{i=1}^N\sum_{j=1}^Nc_ic_j\tilde{k}(x_i,x_j)\geq 0$ for $\{c_i\in\mathbb{R}\}_{i=1}^N$ with $\sum_{i=1}^N c_i=0.$
\label{def:cpd_kernel}
\end{definition}

\begin{fact}[\citet{berg1984harmonic} and Prop. 5 of \citet{Schplkopf2000cpd}]
    Let $\tilde{k}:\mathcal{X}\times\mathcal{X}\rightarrow (-\infty,0]$ be a CPD kernel with $\tilde{k}(x,x)=0~\forall x\in \mathcal{X}$. Then, there exists a Hilbert space $\mathcal{H}$ and a mapping $\phi:\mathcal{X}\rightarrow \mathcal{H}$ such that $\norm{\phi(x)-\phi(x')}^2=-\tilde{k}(x,x')$.
    \label{fact:cpd_metric}
\end{fact}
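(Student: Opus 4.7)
The plan is to adapt the classical Schoenberg trick, which turns a CPD kernel that vanishes on the diagonal into a PD kernel by anchoring at a reference point, and then to invoke the standard Moore--Aronszajn construction to extract the feature map.

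\textbf{Step 1 (build a PD kernel from $\tilde k$).} Fix any $x_0 \in \mathcal{X}$ and define
\[
k(x,y) \;:=\; \tfrac{1}{2}\bigl[\tilde{k}(x,y) - \tilde{k}(x,x_0) - \tilde{k}(x_0,y)\bigr],
\]
having already used $\tilde{k}(x_0,x_0)=0$ to drop that constant. To show $k$ is positive definite, I would take any $\{x_i\}_{i=1}^N \subset \mathcal{X}$ and $\{c_i\}_{i=1}^N \subset \mathbb{R}$, and introduce the auxiliary coefficient $c_0 := -\sum_{i=1}^N c_i$ so that $\sum_{i=0}^N c_i = 0$. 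Applying Definition~\ref{def:cpd_kernel} to the enlarged family $\{(c_i,x_i)\}_{i=0}^N$ gives $\sum_{i,j=0}^N c_i c_j \tilde{k}(x_i,x_j) \geq 0$. Expanding this sum and using $\tilde k(x_0,x_0)=0$ together with the symmetry of $\tilde{k}$ collapses the cross terms into exactly $2\sum_{i,j=1}^N c_i c_j\,k(x_i,x_j)$, establishing the PD property for $k$.

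\textbf{Step 2 (extract the feature map).} Since $k$ is symmetric and satisfies Definition~\ref{def:kernel}, a standard Moore--Aronszajn / reproducing-kernel Hilbert space construction delivers a Hilbert space $\mathcal{H}$ and a map $\phi:\mathcal{X}\to\mathcal{H}$ with $\langle \phi(x), \phi(y)\rangle_{\mathcal{H}} = k(x,y)$.

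\textbf{Step 3 (verify the distance formula).} On the diagonal one has $k(x,x) = -\tilde k(x,x_0)$ (using $\tilde k(x,x)=0$ and symmetry), so
\[
\norm{\phi(x)-\phi(y)}^2 \;=\; k(x,x)+k(y,y)-2k(x,y) \;=\; -\tilde k(x,x_0)-\tilde k(y,x_0) - \bigl[\tilde{k}(x,y)-\tilde k(x,x_0)-\tilde k(x_0,y)\bigr] \;=\; -\tilde k(x,y),
\]
since the two anchor terms $\tilde k(\cdot,x_0)$ cancel by symmetry.

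The main obstacle is Step 1: the non-obvious move is padding the coefficient vector with $c_0 = -\sum_{i=1}^N c_i$ so that the CPD hypothesis, which only gives information for zero-sum coefficients, can be activated to prove unconstrained positivity of the quadratic form in $k$. Everything else is mechanical bookkeeping, once the normalising factor $\tfrac{1}{2}$ is in place. The hypothesis $\tilde k \leq 0$ is actually not used directly in the argument---it follows automatically from Definition~\ref{def:cpd_kernel} applied to $N=2$, $c_1=1$, $c_2=-1$---but it makes the conclusion $\norm{\phi(x)-\phi(y)}^2 = -\tilde k(x,y) \geq 0$ visibly consistent.
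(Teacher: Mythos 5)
Your proof is correct; the paper does not prove this Fact itself but cites it to \citet{berg1984harmonic} and Prop.~5 of \citet{Schplkopf2000cpd}, and your argument is exactly the standard one used there: anchor at $x_0$, show $k(x,y)=\tfrac12[\tilde k(x,y)-\tilde k(x,x_0)-\tilde k(x_0,y)]$ is PD by padding the coefficients with $c_0=-\sum_i c_i$ to activate the CPD hypothesis, then read off the distance identity from the induced feature map. All three steps, including the polarization bookkeeping in Step~3, check out.
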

Fact~\ref{fact:cpd_metric} suggests that CPD kernels generalize distance metrics to high dimensional spaces. Since we are interested in positional differences, we examine modeling the distance between positions using CPD kernels.

However, Fact~\ref{fact:cpd_metric} also implies that CPD kernels do not encode inner products as required by self-attention for the computation of pairwise relations. PD kernels represent inner products. To better understand the effect of CPD kernels on self-attention, we need to establish relations between CPD and PD kernels. As noted in \citet{Schplkopf2000cpd}, if one takes any PD kernel and offsets it by a constant, the result is at least a CPD kernel. In the next subsection, we show that the converse is~\emph{nearly} true: if $\tilde{k}$ is CPD, so is $c+\tilde{k}$ for large enough $c\in\mathbb{R}$ (Lemma~\ref{lemma:cpd_shift}). Therefore, we may generate the CPD kernels of interest and transform them into PD kernels if needed.

\subsection{Constructing PD Kernels From CPD Kernels via Constant Shifts}

In this subsection, we review a few properties of CPD kernels and use these to generate a variety of CPD kernels. Then, we present a lemma that transforms CPD kernels into PD kernels via constant shifts. This enables the production of a family of PD kernels from CPD kernels. Finally, we present our critical observation that the exact value of the constant shift is not needed, thanks to a nice property of Softmax normalization.

Below are some important facts about CPD kernels.
\begin{fact}[Scaling and Summation]
    If $\tilde{k}_1$ and $\tilde{k}_2$ are CPD, then so are $a\cdot \tilde{k}_1$ (for $a>0$) and $\tilde{k}_1+\tilde{k}_2$.
    \label{fact:cpd_prop_basic}
\end{fact}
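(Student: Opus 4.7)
The plan is to proceed directly from Definition~\ref{def:cpd_kernel}, exploiting the linearity of the quadratic form in $\tilde k$ and the fact that the constraint $\sum_i c_i = 0$ is a property of the coefficients alone, not of the kernel. Fix any integer $N$, any points $\{x_i\}_{i=1}^N$ in $\mathcal{X}$, and any coefficients $\{c_i\}_{i=1}^N \subset \mathbb{R}$ with $\sum_{i=1}^N c_i = 0$. I will verify both claims with this single choice of data.

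For the scaling claim, first note that $a\tilde k_1$ is symmetric because $\tilde k_1$ is and $a$ is a scalar. Then I would pull the constant out of the double sum: $\sum_{i,j} c_i c_j (a\tilde k_1(x_i,x_j)) = a \sum_{i,j} c_i c_j \tilde k_1(x_i,x_j)$. Since $\tilde k_1$ is CPD the inner sum is nonnegative, and since $a>0$ the product is nonnegative, giving the CPD property for $a \tilde k_1$.

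For the summation claim, symmetry of $\tilde k_1 + \tilde k_2$ follows from symmetry of each summand. Splitting the quadratic form,
\begin{equation*}
\sum_{i,j} c_i c_j (\tilde k_1 + \tilde k_2)(x_i,x_j) = \sum_{i,j} c_i c_j \tilde k_1(x_i,x_j) + \sum_{i,j} c_i c_j \tilde k_2(x_i,x_j),
\end{equation*}
and both terms on the right are nonnegative because the same $c_i$ satisfy $\sum_i c_i = 0$, so the CPD hypothesis applies to each kernel simultaneously. Adding two nonnegative quantities yields a nonnegative total, completing the proof.

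There is essentially no obstacle here: the only subtlety worth flagging is that the zero-sum constraint is on the coefficients $c_i$ and does not depend on the kernel, so the same vector $c$ witnesses the CPD inequality for $\tilde k_1$, for $\tilde k_2$, and hence for any positive linear combination. This is what makes the positive cone of CPD kernels closed under the stated operations, and it is exactly what will be used later to build richer CPD kernels out of simpler ones.
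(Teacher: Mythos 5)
Your proof is correct and complete: the paper states this fact without proof (treating it as routine), and your direct verification from Definition~\ref{def:cpd_kernel} --- using linearity of the quadratic form in the kernel and the observation that the zero-sum constraint on the $c_i$ is independent of the kernel --- is exactly the standard argument one would supply. Nothing is missing; the remark that the same coefficient vector witnesses the inequality for both kernels simultaneously is the one point worth making, and you made it.
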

\begin{fact}[\citet{berg1984harmonic} and Prop. 4 of \citet{Schplkopf2000cpd}]
    If $\tilde{k}:\mathcal{X}\times\mathcal{X}\rightarrow (-\infty,0]$ is CPD, then so are $-(-\tilde{k})^\alpha$ for $0<\alpha<1$ and $-\log(1-\tilde{k})$.
    \label{fact:cpt_prop}
\end{fact}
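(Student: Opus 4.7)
\begin{sproof}
The plan is to reduce both assertions to a single integral-representation trick. The central intermediate result I would invoke is the Schoenberg-type correspondence: if $\tilde{k}:\mathcal{X}\times\mathcal{X}\to(-\infty,0]$ is CPD, then for every $t>0$ the function $e^{t\tilde{k}}$ is a PD kernel. I would derive (or cite) this via the standard anchor construction: pick any $x_0\in\mathcal{X}$ and observe that $K(x,x') = \tilde{k}(x,x_0) + \tilde{k}(x_0,x') - \tilde{k}(x,x') - \tilde{k}(x_0,x_0)$ is PD; then expand $e^{t\tilde{k}}$ as a power series in $K$ (modulo a rank-one correction that drops out on the diagonal for PD conclusions), and use the Schur product theorem together with closure of PD kernels under nonnegative combinations and pointwise limits.

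Once $e^{t\tilde{k}}$ is PD, the shifted kernel $e^{t\tilde{k}}-1$ is CPD: for any $\{c_i\}$ with $\sum_i c_i = 0$, the constant contributes $\bigl(\sum_i c_i\bigr)^2 = 0$, leaving $\sum_{i,j} c_i c_j\, e^{t\tilde{k}(x_i,x_j)} \geq 0$. Combining this with Fact~\ref{fact:cpd_prop_basic} and its routine extension -- via Riemann-sum approximation -- to integrals of CPD-kernel-valued functions against nonnegative measures, any representation of the form $\int_0^\infty (e^{t\tilde{k}}-1)\,d\mu(t)$ with $\mu\geq 0$ automatically yields a CPD kernel.

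It then remains to exhibit the two representations. For the power, I would apply the classical Bernstein identity
\[
x^\alpha = \frac{\alpha}{\Gamma(1-\alpha)} \int_0^\infty \bigl(1 - e^{-tx}\bigr)\,t^{-\alpha-1}\,dt, \qquad x\geq 0,\ 0<\alpha<1,
\]
with $x=-\tilde{k}\geq 0$, yielding $-(-\tilde{k})^\alpha = \frac{\alpha}{\Gamma(1-\alpha)}\int_0^\infty (e^{t\tilde{k}}-1)\,t^{-\alpha-1}\,dt$. For the logarithm, the Frullani identity $\log(1+x) = \int_0^\infty (e^{-t}-e^{-(1+x)t})/t\,dt$, again with $x=-\tilde{k}$, gives $-\log(1-\tilde{k}) = \int_0^\infty (e^{t\tilde{k}}-1)\,\frac{e^{-t}}{t}\,dt$. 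In both cases the integrating measure is manifestly nonnegative, so the CPD conclusion drops out of the previous paragraph.

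The main obstacle is the Schoenberg step: the CPD-to-PD exponentiation is not among the facts stated so far and requires a nontrivial swap of summation and series, as well as the Schur product theorem, to push through cleanly. The remaining verifications are routine: integrability at $t\to 0^+$ in the power case is controlled by $1-e^{-tx} = O(tx)$, integrability at $t\to\infty$ in the logarithm case is controlled by the $e^{-t}$ factor, and both Riemann-sum approximations converge pointwise on finite index sets used in the CPD definition.
\end{sproof}
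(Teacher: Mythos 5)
The paper does not actually prove this fact; it is imported verbatim from \citet{berg1984harmonic} and Prop.~4 of \citet{Schplkopf2000cpd}. Your proposal reconstructs essentially the proof those references give: Schoenberg's theorem ($\tilde{k}$ CPD $\Rightarrow e^{t\tilde{k}}$ PD for every $t>0$), followed by the observation that $x\mapsto x^\alpha$ ($0<\alpha<1$) and $x\mapsto\log(1+x)$ are Bernstein functions with nonnegative L\'evy measure, so that $-g(-\tilde{k})$ is a nonnegative mixture of the CPD kernels $e^{t\tilde{k}}-1$. Both integral identities are correct, the integrability checks at $t\to 0^+$ and $t\to\infty$ are the right ones, and the final passage does not even need a Riemann-sum limit: for fixed $\{x_i\}$ and $\sum_i c_i=0$, the function $t\mapsto\sum_{i,j}c_ic_j\bigl(e^{t\tilde{k}(x_i,x_j)}-1\bigr)$ is pointwise nonnegative, so Tonelli lets you swap the finite sum with the integral directly.

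There is one concrete error, and it sits in the step you yourself flag as the crux. The anchored kernel that is PD is $K(x,x')=\tilde{k}(x,x')-\tilde{k}(x,x_0)-\tilde{k}(x_0,x')+\tilde{k}(x_0,x_0)$, i.e.\ the \emph{negative} of what you wrote; one sees this by appending $x_0$ with coefficient $c_0=-\sum_i c_i$ and expanding the CPD quadratic form. With your sign, $K$ is negative semidefinite and the power-series/Schur-product argument for $e^{tK}$ does not go through. With the correct sign one writes $\tilde{k}(x,x')=K(x,x')+\tilde{k}(x,x_0)+\tilde{k}(x_0,x')-\tilde{k}(x_0,x_0)$, whence $e^{t\tilde{k}(x,x')}=e^{tK(x,x')}\cdot e^{t\tilde{k}(x,x_0)}e^{t\tilde{k}(x_0,x')}\cdot e^{-t\tilde{k}(x_0,x_0)}$ is a product of a PD kernel, a rank-one PD kernel, and a positive constant, hence PD. Modulo that sign flip your argument is complete, and it is more self-contained than what the paper offers for this fact (a citation) and more general than the alternative route of Appendix~\ref{appendix:alternative}, which handles only the power case for the Euclidean distance via \citet{schoenberg1938metric}.
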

\begin{fact}[Page 3 of \citet{Schplkopf2000cpd}]
    The negative squared distance $-\norm{x-x'}^2$ is CPD.
    \label{fact:negdis_prop}
\end{fact}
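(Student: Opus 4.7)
The plan is to verify the CPD inequality directly from Definition~\ref{def:cpd_kernel} by expanding the squared distance into inner products and exploiting the constraint $\sum_i c_i = 0$.

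First, I would fix an arbitrary integer $N$, points $\{x_i\}_{i=1}^N$ in the ambient inner product space (say $\mathbb{R}^d$), and coefficients $\{c_i\}_{i=1}^N$ with $\sum_{i=1}^N c_i = 0$. The goal is to show $\sum_{i,j} c_i c_j \bigl(-\norm{x_i-x_j}^2\bigr) \geq 0$.

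Next, I would expand $\norm{x_i - x_j}^2 = \norm{x_i}^2 - 2\langle x_i, x_j \rangle + \norm{x_j}^2$ and substitute this into the double sum. The $\norm{x_i}^2$ term yields $\sum_i c_i \norm{x_i}^2 \cdot \sum_j c_j$, and the $\norm{x_j}^2$ term yields $\sum_j c_j \norm{x_j}^2 \cdot \sum_i c_i$; both vanish because $\sum_i c_i = 0$. What remains is $2 \sum_{i,j} c_i c_j \langle x_i, x_j\rangle$, which factors as $2\bigl\langle \sum_i c_i x_i, \sum_j c_j x_j\bigr\rangle = 2\bigl\|\sum_i c_i x_i\bigr\|^2 \geq 0$. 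Combining with the negative sign in front of $\norm{x_i-x_j}^2$ and tracking the signs gives precisely the CPD inequality.

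There is essentially no obstacle: the proof is a one-line algebraic manipulation once the constraint $\sum_i c_i = 0$ is applied. The only subtlety worth flagging in the writeup is that this argument works in any real inner product space, so it applies verbatim when $x_i \in \mathbb{R}$ (our positional indices) with the usual absolute value, which is the setting relevant for the rest of the paper. I would keep the proof to two or three lines of display math.
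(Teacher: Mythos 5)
Your proof is correct: expanding $\norm{x_i-x_j}^2$ into inner products, killing the diagonal terms via $\sum_i c_i=0$, and recognizing the remainder as $2\bigl\|\sum_i c_i x_i\bigr\|^2\geq 0$ is exactly the standard argument for this fact. The paper itself states it without proof, citing \citet{Schplkopf2000cpd}, and your two-line verification is precisely the argument given there, so nothing further is needed.
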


The three Facts above jointly yield a rich family of CPD kernels as shown below.

\begin{corollary}
    The following are CPD kernels.
    \begin{enumerate}[topsep=-3pt, itemsep=-3pt,label=(\alph*)]
        \item $\tilde{k}(x,x')=-a\norm{x-x'}^p$ with $0<p\leq 2$ and $a>0$.
        \item $\tilde{k}(x,x')=-b\cdot \log(1+a\norm{x-x'}^p)$ with $0<p\leq 2$ and $a,b>0$.
    \end{enumerate}
    \label{cor:cpd_examples}
\end{corollary}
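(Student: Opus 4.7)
The plan is to derive both kernels as direct compositions of Facts~\ref{fact:cpd_prop_basic}, \ref{fact:cpt_prop}, and \ref{fact:negdis_prop}, essentially building up from the negative squared distance.

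For part (a), I would start from Fact~\ref{fact:negdis_prop}, which gives that $\tilde{k}_0(x,x')=-\norm{x-x'}^2$ is CPD, and note that it takes values in $(-\infty,0]$ so Fact~\ref{fact:cpt_prop} applies. Choosing the exponent $\alpha=p/2\in(0,1)$ for the case $0<p<2$, Fact~\ref{fact:cpt_prop} yields that $-(-\tilde{k}_0)^{p/2}=-(\norm{x-x'}^2)^{p/2}=-\norm{x-x'}^p$ is CPD. The boundary case $p=2$ is already covered by Fact~\ref{fact:negdis_prop} directly. Finally, multiplying by $a>0$ and invoking Fact~\ref{fact:cpd_prop_basic} gives that $-a\norm{x-x'}^p$ is CPD for all $0<p\leq 2$ and $a>0$.

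For part (b), I would feed the result of part (a) back into Fact~\ref{fact:cpt_prop}. Specifically, set $\tilde{k}_1(x,x')=-a\norm{x-x'}^p$, which is CPD by (a) and takes values in $(-\infty,0]$ because $a>0$ and the norm is nonnegative. Applying the second clause of Fact~\ref{fact:cpt_prop} then gives that $-\log(1-\tilde{k}_1(x,x'))=-\log(1+a\norm{x-x'}^p)$ is CPD. One more application of the scaling half of Fact~\ref{fact:cpd_prop_basic} with the constant $b>0$ yields the stated kernel $-b\log(1+a\norm{x-x'}^p)$.

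There is no genuine obstacle here; the only care needed is in the bookkeeping. In particular, I would double-check that at each step the intermediate kernel actually maps into $(-\infty,0]$ (which is the hypothesis required for Fact~\ref{fact:cpt_prop}), and that the exponent $p/2$ lies in $(0,1)$ so the strict inequality in Fact~\ref{fact:cpt_prop} is respected, handling $p=2$ as a separate trivial case. The overall proof is therefore a two-line chain for (a) and a one-line extension for (b).
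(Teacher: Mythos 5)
Your proof is correct and follows exactly the route the paper intends: the paper gives no explicit proof of Corollary~\ref{cor:cpd_examples} beyond asserting that Facts~\ref{fact:cpd_prop_basic}--\ref{fact:negdis_prop} jointly yield it, and your chain (Fact~\ref{fact:negdis_prop}, then Fact~\ref{fact:cpt_prop} with $\alpha=p/2$, then scaling for (a); feeding (a) into the log clause of Fact~\ref{fact:cpt_prop} and scaling for (b)) is precisely that composition, with the $p=2$ boundary case and the range condition handled correctly.
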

We note that it is possible to keep iterating between Fact~\ref{fact:cpd_prop_basic} and \ref{fact:cpt_prop} and generate more complicated examples, e.g., $-a\norm{x-x'}^p-b\cdot \log(1+a\norm{x-x'}^p)$ or $-b\cdot \log(1+a\norm{x-x'}^p)^c$ for $0<c<1$. However, since relative positional embeddings are of our interest, we only consider simple CPD kernels. Those with complicated forms are deferred to future work.

Now that Corollary~\ref{cor:cpd_examples} has presented a few class of CPD kernels, we prove a lemma (in Appendix~\ref{appendix:cpd_shift_proof}) that constructs PD kernels from CPD kernels through shifting. Later in Eq.~\eqref{eq:krpe-deriv}, we will see that the shifting construction is combined neatly with the Softmax normalization of self-attention.
\begin{lemma}[CPD Shift Lemma. Proof in Appendix~\ref{appendix:cpd_shift_proof}]
	Let $\tilde{k}:\mathcal{X}\times\mathcal{X}\rightarrow\mathbb{R}$ be a CPD kernel. There exists $c\geq 0$ such that $c+\tilde{k}$ is a PD kernel.
	\label{lemma:cpd_shift}
\end{lemma}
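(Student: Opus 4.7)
The plan is to rewrite the PD inequality for $c + \tilde k$ so that the CPD hypothesis controls all but one direction, and the constant $c$ handles the remaining direction. For an arbitrary finite sample $\{x_i\}_{i=1}^N \subset \mathcal X$ and coefficients $\{c_i\}_{i=1}^N$, expanding gives
\begin{align*}
\sum_{i,j} c_i c_j\bigl(c + \tilde k(x_i, x_j)\bigr) = c\Bigl(\sum_i c_i\Bigr)^2 + \sum_{i,j} c_i c_j\, \tilde k(x_i, x_j),
\end{align*}
so the task reduces to finding $c \ge 0$ for which the second (possibly negative) term is dominated by the first.

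I would proceed by the orthogonal decomposition $c_i = s/N + b_i$, where $s := \sum_i c_i$ and $\sum_i b_i = 0$. When $s = 0$ the CPD definition already gives non-negativity for any $c \ge 0$, so attention focuses on $s \ne 0$. In that case the second term splits into a pure-mean piece proportional to $s^2$, a cross term linear in $s$ and the fluctuations $\{b_i\}$, and a pure-fluctuation piece $\sum_{i,j} b_i b_j\, \tilde k(x_i, x_j)$ that is $\ge 0$ by CPD. Viewing the whole right-hand side as a quadratic form in $s$ with $\{b_i\}$ fixed, and requiring its discriminant to be non-positive, yields an explicit lower bound on $c$ in terms of the Rayleigh-type quotient $(\mathbf 1^\top K u)^2 / (u^\top K u)$ ranging over $u \in \mathbf 1^\perp$, with $K = [\tilde k(x_i, x_j)]$. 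For the CPD kernels of Corollary~\ref{cor:cpd_examples} this quotient is finite, and so a suitable $c \ge 0$ exists.

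The most delicate step is handling the possible degeneracy of the CPD form on $\mathbf 1^\perp$: one has to verify that $\mathbf 1^\top K u = 0$ on the kernel of $K$ restricted to $\mathbf 1^\perp$, so that the Cauchy--Schwarz argument underlying the Rayleigh quotient is not blocked by a null direction along which the cross term survives. This compatibility is where the explicit structure of the kernels of interest---for instance via the Hilbert-space embedding of Fact~\ref{fact:cpd_metric}, which writes $\tilde k(x, y) = -\norm{\phi(x) - \phi(y)}^2$---must be exploited. Once compatibility is in hand, the Rayleigh supremum is finite, yielding an explicit non-negative $c$ and completing the construction.
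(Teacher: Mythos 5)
Your overall strategy is the same as the paper's: split the quadratic form of $c+\tilde k$ into $c\,(\sum_i c_i)^2$ plus the form of $\tilde k$, use the CPD hypothesis on the hyperplane $\sum_i c_i=0$, and let $c$ absorb what is left. The paper executes this by a compactness-and-continuity argument on the unit sphere (minimize $v^\top Kv$ over $\norm{v}=1$, handle the band $|v^\top\one|<\delta$ by perturbing onto $\one^\perp$, and take $c\geq -a^*/\delta^2$ elsewhere), whereas you complete the square in $s=\sum_i c_i$ and reduce everything to finiteness of the Rayleigh-type supremum of $(\one^\top Ku)^2/(u^\top Ku)$ over $u\in\one^\perp$. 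Both routes stand or fall on the same nondegeneracy condition, and your write-up stops exactly there: the ``compatibility'' step ($u\in\one^\perp$ and $u^\top Ku=0$ imply $\one^\top Ku=0$) is flagged but never proved, and it is the entire mathematical content of the lemma.

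This is a genuine gap, not a formality, because the compatibility condition can fail for kernels satisfying Definition~\ref{def:cpd_kernel} --- and even for Corollary~\ref{cor:cpd_examples}(a) at $p=2$. Take $\tilde k(x,x')=-|x-x'|^2$ on the four points $(1,-1,0,2)$ and $u=(0,2,-3,1)$: then $\one^\top u=0$ and $Ku=-6\cdot\one$, so $u^\top Ku=0$ while $\one^\top Ku=-24\neq 0$; your quadratic in $s$ retains a surviving linear term in the degenerate direction, the Rayleigh supremum is $+\infty$, and no $c\geq 0$ makes the $4\times 4$ matrix PSD. (A simpler structural counterexample is $\tilde k(x,x')=g(x)+g(x')$ with nonconstant $g$, which is CPD with identically vanishing form on $\one^\perp$ yet admits no PD shift.) So the claim that ``for the CPD kernels of Corollary~\ref{cor:cpd_examples} this quotient is finite'' is not automatic and is actually false at $p=2$; what rescues the cases of real interest ($0<p<2$ and the logarithmic variant, evaluated at distinct positions) is \emph{strict} positivity of $u^\top Ku$ on $\one^\perp\setminus\{0\}$, a nontrivial property of those specific kernels that your argument would still need to establish. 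Be aware that the paper's own proof quietly assumes the same strictness: step (iii) deduces $v^\top Kv>0$ on $\one^\perp$ from ``$K$ has no zero eigenvalue'' via a footnote whose inequality-chain only works when all eigenvalues are nonnegative, so your instinct that structure beyond Definition~\ref{def:cpd_kernel} is required is correct --- but the proposal as written does not close the lemma.
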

Lemma~\ref{lemma:cpd_shift} implies the CPD kernels in Corollary~\ref{cor:cpd_examples} can be made PD if a large enough constant is added. For example, $c-\norm{x-x'}^p$ for large enough $c$. Although Lemma~\ref{lemma:cpd_shift} does not have an explicit construction of $c$, thanks to the shift-invariant property of the Softmax normalization, we can leave it as an under-determined constant in our positional embedding design (Eq.~\eqref{eq:krpe} in section~\ref{sec:krpe}). Given a set of test points $\{x_i\}_{i=1}^N$, one can do a geometric sequence search\footnote{By geometric sequence search, we can enlarge $c$ by 2, 4, 8, 16, and so on until we find the required large enough constant.} to search for a $c$ such that the $N\times N$ matrix $[c+\tilde{k}(x_i,x_j)]_{i,j=1}^N\succeq 0$. Hence, we do not need the value of $c$ , but we can compute it if needed, e.g., deriving the feature map of $c+\tilde{k}$.

\paragraph{Alternative Proof of $\pmb{c-\norm{x-x'}^p}$.} While the CPD shift lemma is convenient, one can prove $c-\norm{x-x'}^p$ is PD for large enough $c$ using a kernel representation theorem in \citet{schoenberg1938metric}. See Appendix~\ref{appendix:alternative} for details.

\section{Kernelized Relative Positional Embedding}
\label{sec:krpe}
Let $\{\bm q_m\}_{m=1}^L$ and $\{\bm k_n\}_{n=1}^L$ be the input queries and keys. Let $(r_1,...,r_\ell)$ be learnable parameters. We propose a kernelized relative positional embedding as follows.
\begin{equation}
    a_{m,n}=\frac{\exp\big((\bm q_m^\top \bm k_n  +\tilde{k}_{r_1,...,r_\ell}(m,n))/\sqrt{d}\big)}{\sum_{i=1}^L \exp((\bm q_m^\top \bm k_i+\tilde{k}_{r_1,...,r_\ell}(m,i))/\sqrt{d})},
    \label{eq:krpe}
\end{equation}
where $\tilde{k}_{r_1,...,r_\ell}(m,n)$ is any shift-invariant CPD kernel with $\ell$ parameters. Due to Lemma~\ref{lemma:cpd_shift}, Eq.~\eqref{eq:krpe} can be reformulated into its kernel form as follows.

\begin{equation}
\begin{split}
    a_{m,n}\overset{(*)}{=}&\frac{\exp\big((\bm q_m^\top \bm k_n  +c+\tilde{k}_{r_1,...,r_\ell}(m,n))/\sqrt{d}\big)}{\sum_{i=1}^L \exp((\bm q_m^\top \bm k_i+c+\tilde{k}_{r_1,...,r_\ell}(m,i))/\sqrt{d})}\\
    \overset{\text{Lemma}~\ref{lemma:cpd_shift}}{=}&\frac{\exp\big(\bm q_m^\top \bm k_n+k_{r_1,...,r_\ell}(m,n))/\sqrt{d}\big)}{\sum_{i=1}^L \exp(\bm q_m^\top \bm k_i +k_{r_1,...,r_\ell}(m,i))/\sqrt{d})}=\frac{\exp\big(k^{\text{comp}}([\bm q_m,m],[\bm k_n,n])/\sqrt{d}\big)}{\sum_{i=1}^L \exp\big(k^{\text{comp}}([\bm q_m,m],[\bm k_i,i])/\sqrt{d}\big)}.
\end{split}
\label{eq:krpe-deriv}
\end{equation}

(*) is due to the shift-invariant property of the Softmax normalization: $\frac{\exp(x_i)}{\sum_j \exp(x_j)}=\frac{\exp(x_i+c)}{\sum_j \exp(x_j+c)}$ for any $c\in \mathbb{R}$. The second equality defines a~\emph{bias kernel} which is positive definite using Lemma~\ref{lemma:cpd_shift}:
\begin{equation}
    k_{r_1,...,r_\ell}= c+\tilde{k}_{r_1,...,r_\ell}.
    \label{eq:bias_kernel}
\end{equation}
The last equality introduces a~\emph{composite kernel} $k^{\text{comp}}:\mathbb{R}^{d+1}\times \mathbb{R}^{d+1}\rightarrow \mathbb{R}$ as
\begin{equation}
    k^{\text{comp}}([\bm q_m,m],[\bm k_n,n])= \bm q_m^\top \bm k_n + k_{r_1,...,r_\ell}(m,n).
    \label{eq:comp_ker}
\end{equation}

\paragraph{Interpretation.} The proposed method can be interpreted as applying a composite kernel to self-attention. The composite kernel combines the information from query $\bm q_m$, key $\bm k_n$, and positions $(m,n)$ in a way that augments the original self-attention structure by multiplicative and additive position embeddings. The augmentation allows $k^{\text{comp}}$ to not only retain the original $\bm q_m^\top \bm k_n$ but also include positional information from the bias kernel $k_{r_1,...,r_\ell}$.

\paragraph{Practical Choice.} In section~\ref{sec:experi_results}, we fix $\ell=2$ and experiment on two variants of the composite kernel, Eq.~\eqref{eq:comp_ker}, where we call these the~\emph{power} variant and  the~\emph{logarithmic} variant of our proposed KERPLE framework, Eq.~\eqref{eq:krpe-deriv}. These are from a combination of Corollary~\ref{cor:cpd_examples} and Eq.~\eqref{eq:bias_kernel}.

\begin{enumerate}[topsep=-3pt, itemsep=-1pt,label=(\alph*),leftmargin=22mm]
    \item[(power)] $k^{\text{comp}}([\bm q_m,m],[\bm k_n,n])= \bm q_m^\top \bm k_n +c-r_1|m-n|^{r_2}$ with $r_1>0$ and $0<r_2\leq 2$.
    \item[(logarithmic)] $k^{\text{comp}}([\bm q_m,m],[\bm k_n,n])= \bm q_m^\top \bm k_n +c-r_1\cdot \log(1+r_2|m-n|)$ with $r_1,r_2>0$.
\end{enumerate}
We note that these are not the only variants of the composite kernel. In section~\ref{sec:ablation}, we experiment with two more complicated variants, but only find lower training speeds and marginal improvement in perplexities (e.g., logarithmic variant vs. 3-para-log). Thus, based on our study, the choices above hold advantages in both performance and speed. 

\paragraph{Connection to Prior Work.} When the bias kernel, Eq.~\eqref{eq:bias_kernel}, is a triangle kernel: $c-|m-n|$, our model reduces to ALiBi \citep{press2022train}. \citet{wennberg2021case} discuss the situation where the bias kernel is a Gaussian kernel. \citet{tsai2019transformer} is the case where there is no bias kernel and the attention product $\bm q_m^\top \bm k_n$ is multiplied by an exponentiated inner product kernel, $\exp(\bm x^\top \bm y)$. Since ALiBi is the state-of-the-art and has great input length extrapolation, we will focus on comparison with ALiBi in our experiments.

The logarithmic variant has an implicit connection to T5 positional bias \citep{raffel2019exploring}. According to the official GitHub repository \url{https://github.com/google-research/text-to-text-transfer-transformer} and the HuggingFace Transformer \citep{wolf2020huggingface}, T5 bias is implemented with a log-binning strategy. For each head of the transformer, they maintain a bucket of 32 learnable parameters and assign the relative positional bias $b_{m-n}$ to these parameters as
$$
b_{m-n}=\begin{cases}
\text{bucket}[0] & \text{if~}m-n< 0\\
\text{bucket}[m-n] & \text{if~}0\leq m-n<16\\
\text{bucket}[\min(31, \lfloor \frac{\log((m-n)/16)}{\log(128/16)})\cdot 16\rfloor] & \text{if~}m-n\geq 16,
\end{cases}
$$
where $\lfloor\cdot\rfloor$ is the floor function. Note that the log factor is approximately $7.7\log\frac{m-n}{16}$. Therefore, T5 is using a logarithmic bucket assignment, which turns out to extrapolate to different input lengths. Compared with T5, our logarithmic variant uses less parameters (2x12 vs. 32x12) but cannot learn non-monotonic relations (the log function is monotonic). We will conduct more comparisons with T5 bias in our experiments.

\section{Experiments}
\label{sec:experi}

\subsection{Dataset and Implementation Description}
\paragraph{Dataset.} We conduct experiments on OpenWebText2, GitHub, and ArXiv datasets gathered in \citet{gao2020pile}. OpenWebText2 includes recent content from Reddit submissions until 2020, content from multiple languages, document metadata, multiple dataset versions, and open-source replication code. GitHub includes open-source repositories written in primary coding languages such as Java, C/C++, Python, and Go. ArXiv includes papers written in LaTex in Math, Computer Science, Physics, and some related fields. These tasks are motivated by the downstream applications such as online chatting \citep{roller2021recipes}, code completion \citep{chen2021codex}, and academic paper summarization \citep{zhang2020pegasus}.
\begin{table}[!ht]
    \centering
    \caption{\textbf{Dataset Overview.} Raw Size is the size before any up- or down-sampling.}
    \begin{tabular}{lccc}
    \hline\hline
    & OpenWebText2 & GitHub & ArXiv\\ \hline
    Raw Size & 66.77 GB & 95.16 GB & 56.21 GB\\
    Type & Internet & Coding & Academic\\
    \hline\hline
    \end{tabular}
    \label{tab:dataset}
    \vspace{-3mm}
\end{table}

\paragraph{Implementation.} We adapt our model from GPT-NeoX \citep{gpt-neox}, a transformer implementation by the EleutherAI team. The codebase is based on NVIDIA Megatron Language Model \citep{shoeybi2019megatron} and further accelerated using Microsoft DeepSpeed library \citep{rasley2020deepspeed}.

Our model is trained on a machine with one NVIDIA A100 GPU with 40 GB of memory. We adopt almost all configurations of small GPT-NeoX\footnote{\url{https://github.com/EleutherAI/gpt-neox/blob/main/configs/small\_bf16.yml}}, except that we change the train-micro-batch-size to 32, seq-length to 512, and max-position-embeddings to 512. Table~\ref{tab:model_configs} summarizes the important configurations fixed throughout our experiments.
\begin{table}[!ht]
    \centering
    \setlength{\tabcolsep}{3pt}
    \caption{\textbf{162M Model Configurations.}}
    \begin{tabular}{ccccc}
        \hline\hline
         \# Layers & Hidden Size & \# Attention Heads & Train Seq. Len. & \# Trainable Params.\\
         12 & 64 & 12 & 512 & ~162M\\ \hline
         Optimizer & Batch Size & Train Steps & Precision & \# Trainable Params. for RPEs\\
         Adam (lr 6e-4) & 32 & 50,000 & bfloat16 & at most 36\\
         \hline\hline
    \end{tabular}
    \label{tab:model_configs}
\end{table}
In particular, the floating-point encoding is set as bfloat16 (Brain Floating Point, developed by Google Brain) so that the training can be accelerated by half-precision computation with reliable stability \citep{kalamkar2019bf16}. Hidden size 64 means that $d=64$ in Eq.~\eqref{eq:krpe}.

\subsection{Experimental Results (Also c.f. Appendix~\ref{appendix:more-experi} to~\ref{sec:hyper_model})}
\label{sec:experi_results}
We conduct experiments to cover aspects such as input length extrapolation, application on different domains, and comparison with the prior work. These are elaborated on below. (i) Motivated by the input length extrapolation demonstrated in \citep{press2022train}, we train our model with length 512 and test on lengths ranging from 512 to 16384. We hope that the emphasis on extrapolation enables the application of transformers to longer sequences. (ii) To evaluate the applicability of the model in different domains, we conduct experiments on OpenWebText2, GitHub, and ArXiv datasets. (iii) To validate the effectiveness of our method, we compare KERPLE with Sinusoidal \citep{vaswani2017attention}, Rotary \citep{su2021roformer}, T5 \citep{raffel2019exploring}, and ALiBi \citep{press2022train}.

Table~\ref{tab:openweb-github-arxiv} reports the perplexities at different extrapolation lengths. We perform non-overlapping evaluation: Suppose text is segmented in a different manner for 512 and 1024 tokens, we have N sentences and N/2 correspondingly to evaluate. We also perform a paired two-sided t-test to validate the statistical significance (significance level=0.05). We compare each candidate RPE with our proposed logarithmic variant and mark the candidate with a $^\dagger$~\emph{if the log variant is statistically significantly better}. Table~\ref{tab:speed} reports the training speeds. These tables yield three conclusions. First, within the KERPLE framework, the logarithmic variant is better than the power variant. Secondly, the logarithmic variant is 9.7\% faster than T5. In terms of extrapolation, the logarithmic variant generally does better than T5 but could be slightly worse than T5 at shorter lengths. Third, the logarithmic variant is slightly slower than some prior work (ALiBi, Rotary, and Sinusoidal) but consistently outperform these methods at all extrapolation lengths. More details are given below.

\paragraph{Logarithmic Variant vs. Power Variant.} In our proposed KERPLE framework, the logarithmic variant is better than the power variant. Precisely, the logarithmic variant is 4.4\% faster and has lower perplexities across all extrapolation lengths and all tasks.

\paragraph{Logarithmic Variant vs. T5.} In terms of speed, the logarithmic variant is 9.7\% faster than T5. In terms of extrapolation perplexity, the logarithmic variant is close to or slightly worse than T5 when the extrapolation length is shorter than 2048, and consistently excels T5 at longer extrapolation lengths. The tendency of extrapolation holds for all datasets evaluated in this work. 

\paragraph{Logarithmic Variant vs. ALiBi, Rotary, and Sinusoidal.} The logarithmic variant is 1.6\% slower, 7.5\% faster, and 3.0\% slower than ALiBi, Rotary, and Sinusoidal. The speed comparison makes sense because we require only a limited amount of learnable parameters for RPEs (at most $3\cdot H$). Also, the logarithmic variant consistently outperforms prior work at all extrapolation lengths and tasks.

\begin{table}[!ht]
    \setlength{\tabcolsep}{2pt}
    \centering
    \caption{\textbf{Perplexity Comparison on OpenWebText2, GitHub, and ArXiv.} All models are trained for 50k steps with training length 512 and five random seeds. $x^\dagger$ means our log variant is statistically significantly~\emph{better} than $x$. The test used is paired two-sided t-test with $\alpha=0.05$.}
    \begin{tabular}{@{\extracolsep{3pt}}lcccccc}
    \hline\hline
    \multicolumn{7}{c}{\textbf{OpenWebText2}}\\
    \hline
    \multirow{2}{*}{Extrp.} & \multicolumn{2}{c}{KERPLE} &
     \multirow{2}{*}{ALiBi} & \multirow{2}{*}{T5} & \multirow{2}{*}{Rotary} & \multirow{2}{*}{Sinusoidal}\\
     \cline{2-3}
     &   (log) & (power) &  &  &  & \\ \hline
    512 & 23.9 $\pm$ 0.6 & 23.9 $\pm$ 0.6 & 23.9 $\pm$ 0.6 & \textbf{23.7 $\pmb{\pm}$ 0.6} & 24.2 $\pm$ 0.6$^\dagger$ & 33 $\pm$ 1$^\dagger$\\
1024 & 22.0 $\pm$ 0.6 & 22.1 $\pm$ 0.7 & 22.4 $\pm$ 0.5$^\dagger$ & \textbf{21.9 $\pmb{\pm}$ 0.6} & 32.8 $\pm$ 1.7$^\dagger$ & 750 $\pm$ 346$^\dagger$\\
2048 & \textbf{21.6 $\pmb{\pm}$ 0.3} & 21.9 $\pm$ 0.2$^\dagger$ & 22.5 $\pm$ 0.2$^\dagger$ & 21.7 $\pm$ 0.2 & 62.4 $\pm$ 6.1$^\dagger$ & 5507 $\pm$ 2607$^\dagger$\\
4096 & \textbf{21.2 $\pmb{\pm}$ 0.4} & 21.5 $\pm$ 0.5$^\dagger$ & 22.2 $\pm$ 0.4$^\dagger$ & 22.5 $\pm$ 0.6$^\dagger$ & 111 $\pm$ 13.8$^\dagger$ & 14039 $\pm$ 2325$^\dagger$\\
8192 & \textbf{21.3 $\pmb{\pm}$ 0.4} & 21.6 $\pm$ 0.4$^\dagger$ & 22.3 $\pm$ 0.3$^\dagger$ & 25.5 $\pm$ 1.3$^\dagger$ & 185 $\pm$ 18.9$^\dagger$ & 22621 $\pm$ 1927$^\dagger$\\
16384 & \textbf{21.4 $\pmb{\pm}$ 0.6} & 21.6 $\pm$ 0.6 & 22.5 $\pm$ 0.5$^\dagger$ & 31.4 $\pm$ 3.1$^\dagger$ & 269 $\pm$ 33.0$^\dagger$ & 30046 $\pm$ 4824$^\dagger$\\
    \hline\hline
    \multicolumn{7}{c}{\textbf{GitHub}}\\
    \hline
    \multirow{2}{*}{Extrp.}& \multicolumn{2}{c}{KERPLE} & \multirow{2}{*}{ALiBi} & \multirow{2}{*}{T5} & \multirow{2}{*}{Rotary} & \multirow{2}{*}{Sinusoidal} \\
     \cline{2-3}
     &   (log) & (power) &  &  &  & \\ \hline
    512 & 3.40 $\pm$ 0.20 & 3.42 $\pm$ 0.20 & 3.42 $\pm$ 0.21 & \textbf{3.38 $\pmb{\pm}$ 0.21} & 3.44 $\pm$ 0.20$^\dagger$ & 4 $\pm$ 0.2$^\dagger$\\
1024 & 3.04 $\pm$ 0.14 & 3.07 $\pm$ 0.16 & 3.15 $\pm$ 0.17$^\dagger$ & \textbf{3.02 $\pmb{\pm}$ 0.14} & 3.86 $\pm$ 0.25$^\dagger$ & 105 $\pm$ 39$^\dagger$\\
2048 & 2.86 $\pm$ 0.10 & 2.90 $\pm$ 0.08$^\dagger$ & 3.13 $\pm$ 0.10$^\dagger$ & \textbf{2.84 $\pmb{\pm}$ 0.09} & 5.94 $\pm$ 0.64$^\dagger$ & 1380 $\pm$ 404$^\dagger$\\
4096 & \textbf{2.74 $\pmb{\pm}$ 0.05} & 2.79 $\pm$ 0.06 & 3.04 $\pm$ 0.08$^\dagger$ & 2.78 $\pm$ 0.04$^\dagger$ & 11.1 $\pm$ 1.55$^\dagger$ & 5217 $\pm$ 1118$^\dagger$\\
8192 & \textbf{2.71 $\pmb{\pm}$ 0.05} & 2.76 $\pm$ 0.05 & 3.04 $\pm$ 0.03$^\dagger$ & 2.95 $\pm$ 0.13$^\dagger$ & 20.2 $\pm$ 2.75$^\dagger$ & 10081 $\pm$ 3583$^\dagger$\\
16384 & \textbf{2.75 $\pmb{\pm}$ 0.16} & 2.76 $\pm$ 0.13 & 3.02 $\pm$ 0.13$^\dagger$ & 3.35 $\pm$ 0.27$^\dagger$ & 31.3 $\pm$ 5.20$^\dagger$ & 16443 $\pm$ 8503$^\dagger$\\
    \hline\hline
    \multicolumn{7}{c}{\textbf{ArXiv}}\\
    \hline
    \multirow{2}{*}{Extrp.} & \multicolumn{2}{c}{KERPLE} & \multirow{2}{*}{ALiBi} & \multirow{2}{*}{T5} & \multirow{2}{*}{Rotary} & \multirow{2}{*}{Sinusoidal}\\
      \cline{2-3}
     &   (log) & (power) &  &  &  & \\ \hline
    512 & 6.07 $\pm$ 0.26 & 6.10 $\pm$ 0.26 & 6.12 $\pm$ 0.26$^\dagger$ & \textbf{6.03 $\pmb{\pm}$ 0.26} & 6.07 $\pm$ 0.27 & 43 $\pm$ 44\\
1024 & 5.61 $\pm$ 0.10 & 5.65 $\pm$ 0.10$^\dagger$ & 5.82 $\pm$ 0.09$^\dagger$ & \textbf{5.58 $\pmb{\pm}$ 0.09} & 7.49 $\pm$ 0.34$^\dagger$ & 221 $\pm$ 136$^\dagger$\\
2048 & 5.22 $\pm$ 0.12 & 5.26 $\pm$ 0.13$^\dagger$ & 5.71 $\pm$ 0.14$^\dagger$ & \textbf{5.21 $\pmb{\pm}$ 0.14} & 14.2 $\pm$ 1.81$^\dagger$ & 730 $\pm$ 343$^\dagger$\\
4096 & \textbf{5.20 $\pmb{\pm}$ 0.10} & 5.25 $\pm$ 0.09 & 5.87 $\pm$ 0.08$^\dagger$ & 5.32 $\pm$ 0.16$^\dagger$ & 30.1 $\pm$ 4.32$^\dagger$ & 1998 $\pm$ 497$^\dagger$\\
8192 & \textbf{5.01 $\pmb{\pm}$ 0.10} & 5.06 $\pm$ 0.15 & 5.74 $\pm$ 0.13$^\dagger$ & 5.54 $\pm$ 0.39$^\dagger$ & 54.3 $\pm$ 6.22$^\dagger$ & 4228 $\pm$ 2645$^\dagger$\\
16384 & \textbf{5.07 $\pmb{\pm}$ 0.16} & 5.07 $\pm$ 0.19 & 5.78 $\pm$ 0.15$^\dagger$ & 6.25 $\pm$ 0.61$^\dagger$ & 85.4 $\pm$ 7.40$^\dagger$ & 6674 $\pm$ 5696\\
    \hline\hline
    \end{tabular}
    \label{tab:openweb-github-arxiv}
\end{table}

\begin{table}[!ht]
    \centering
    \caption{\textbf{Training Time Comparison on GitHub}}
    \begin{tabular}{lcccccc}
    \hline\hline
     & \multicolumn{2}{c}{KERPLE} & \multirow{2}{*}{ALiBi} & \multirow{2}{*}{T5} & \multirow{2}{*}{Rotary} & \multirow{2}{*}{Sinusoidal}\\
      \cline{2-3}
     &   (log) & (power) &  &  &  & \\ \hline
    sec/step & 0.307 & 0.321 & 0.302 & 0.340 & 0.332 & 0.298 \\ 
    \hline\hline
    \end{tabular}
    \label{tab:speed}
\end{table}

\subsection{Experiments on Complicated Kernels}
\label{sec:ablation}
In addition to the practical variants (power \& logarithmic) in section~\ref{sec:krpe}, we consider two complicated versions of the composite kernel, Eq.~\eqref{eq:comp_ker}, as follows.
\begin{enumerate}[topsep=-3pt, itemsep=-1pt, leftmargin=20mm]
    \item[(bias+wht)] bias + weight:\\
    $k^{\text{comp}}([\bm q_m,m],[\bm k_n,n])= \bm q_m^\top \bm k_n\cdot \exp(-r_3|m-n|^{r_4}) + c-r_1|m-n|^{r_2}$\\ 
    with $r_1,r_3>0$ and $0<r_2,r_4\leq 2$.
    \item[(3-para-log)] 3-parameter-logarithmic:\\
    $k^{\text{comp}}([\bm q_m,m],[\bm k_n,n])= \bm q_m^\top \bm k_n+c-r_1\cdot \log(1+r_2|m-n|^{r_3})$\\
    with $r_1,r_2>0$ and $0<r_3\leq 2$.
\end{enumerate}
Recall the tensor product property of a kernel: if $k_1$ is a kernel on $\mathcal{X}$ and $k_2$ is a kernel on $\mathcal{Y}$, then $k((x,y),(x',y'))=k_1(x,x')k_2(y,y')$ is a kernel on $\mathcal{X}\times \mathcal{Y}$. Therefore, (bias+wht) is the setting where we train a weight $\exp(-r_3|m-n|^{r_4})$ and a bias kernel $c-r_1|m-n|^{r_2}$. $\bm q_m^\top \bm k_n$ is multiplied by the weight kernel and then added with the bias kernel. (3-para-log) is the setting where we consider $|m-n|^{r_3}$ in the log. When $r_3=1$, it is reduced to the logarithmic variant proposed in section~\ref{sec:krpe}.

We plug in these composite kernel $k^{\text{comp}}$ into our KERPLE framework, Eq.~\eqref{eq:krpe-deriv}, and test the performance of these RPE. Compared with section~\ref{sec:experi_results}, Table~\ref{tab:add_experi} suggests that these variants do not have clear advantage in extrapolation performance, e.g., 3-para-log is slightly better in perplexity than the (two-parameter) logarithmic variant. Thus, enlarging the complexity of kernels does not necessarily give better performance in the context of RPE.

\begin{table}[!ht]
    \setlength{\tabcolsep}{4pt}
    \centering
    \caption{\textbf{Perplexity Comparison for KERPLE with Complicated Kernels on OpenWebText2, GitHub, and ArXiv.} All models are trained for 50k steps with training length 512 and five seeds random. OOM means out of memory.}
    \begin{tabular}{@{\extracolsep{3pt}}lcccccc}
    \hline\hline
      \multirow{2}{*}{Extrp.} & \multicolumn{2}{c}{OpenWebText2} &\multicolumn{2}{c}{GitHub} &\multicolumn{2}{c}{ArXiv}\\
      \cline{2-3} \cline{4-5} \cline{6-7}
     &  (bias+wht) & (3-para-log) &  (bias+wht) & (3-para-log) & (bias+wht) & (3-para-log) \\ \hline
    512 & 24.1 $\pm$ 0.6 & 23.8 $\pm$ 0.6  & 3.44 $\pm$ 0.21 & 3.40 $\pm$ 0.20 & 6.11 $\pm$ 0.27 & 6.06 $\pm$ 0.27\\
1024 & 22.2 $\pm$ 0.6 & 22.0 $\pm$ 0.7 & 3.08 $\pm$ 0.15 & 3.04 $\pm$ 0.13 & 5.66 $\pm$ 0.09 & 5.61 $\pm$ 0.10\\
2048 & 21.9 $\pm$ 0.4 & 21.6 $\pm$ 0.2 & 2.90 $\pm$ 0.12 & 2.85 $\pm$ 0.10 & 5.28 $\pm$ 0.12 & 5.21 $\pm$ 0.12\\
4096 & 21.5 $\pm$ 0.5 & 21.2 $\pm$ 0.4 & 2.79 $\pm$ 0.06 & 2.73 $\pm$ 0.05 & 5.31 $\pm$ 0.08 & 5.18 $\pm$ 0.09\\
8192 & 21.4 $\pm$ 0.5 & 21.3 $\pm$ 0.4 & 2.76 $\pm$ 0.03 & 2.68 $\pm$ 0.04 & 5.16 $\pm$ 0.18 & 5.00 $\pm$ 0.11\\
16384 & OOM & OOM & OOM & OOM & OOM & OOM\\
    \hline\hline
    \end{tabular}
    \label{tab:add_experi}
\end{table}


\subsection{Plots of Kernel Functions}
We plot kernel functions including the power, log variants, and ALiBi for different heads to see their contributions to softmax. We use the GitHub dataset for demonstration. Please see Figure~\ref{fig:log_rebuttal},~\ref{fig:power_rebuttal}, and~\ref{fig:alibi_rebuttal}. Both ALiBi and its generalized power variant quickly reach a very negative value. In contrast, the log variant successfully discovers several flat kernels, effectively extending the window attention. This corroborates our previous observation that KERPLE-log can utilize more distant token information.

\begin{figure*}[!ht]
\centering
\caption{Kernel Functions of Learned by the Log Variant.}
\includegraphics[width=0.7\textwidth]{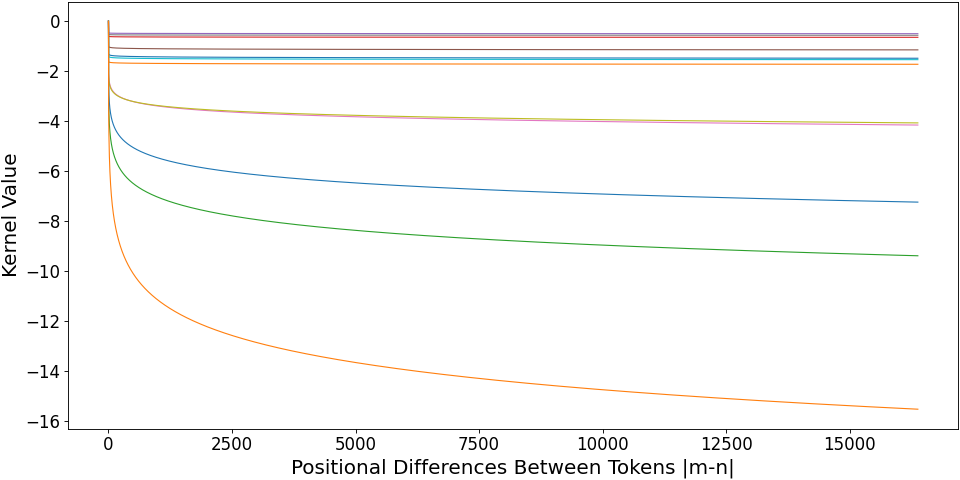}
\label{fig:log_rebuttal}
\end{figure*}

\begin{figure*}[!ht]
\centering
\caption{Kernel Functions Learned by the Power Variant. Note the y-axis should be multiplied by $1e8$, which is a very negative value.}
\includegraphics[width=0.7\textwidth]{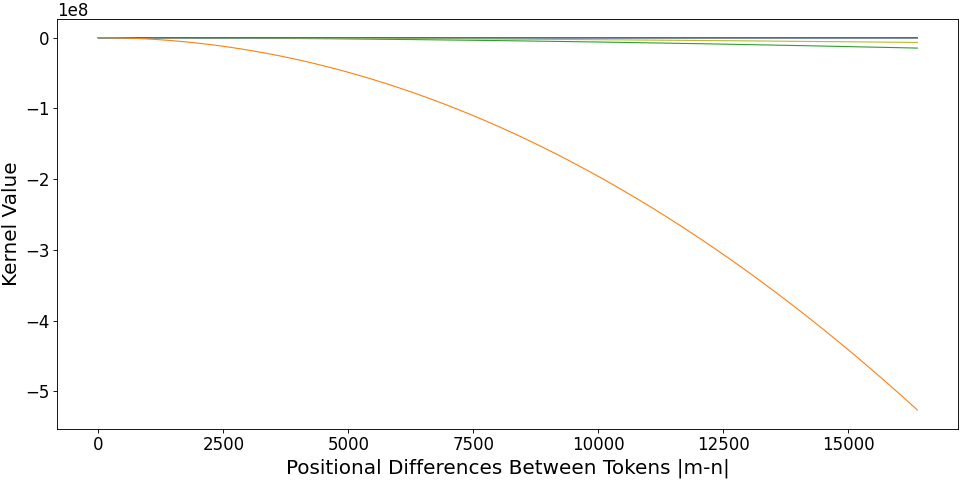}
\label{fig:power_rebuttal}
\end{figure*}

\begin{figure*}[!ht]
\centering
\caption{Kernel Functions Learned by ALiBi.}
\includegraphics[width=0.7\textwidth]{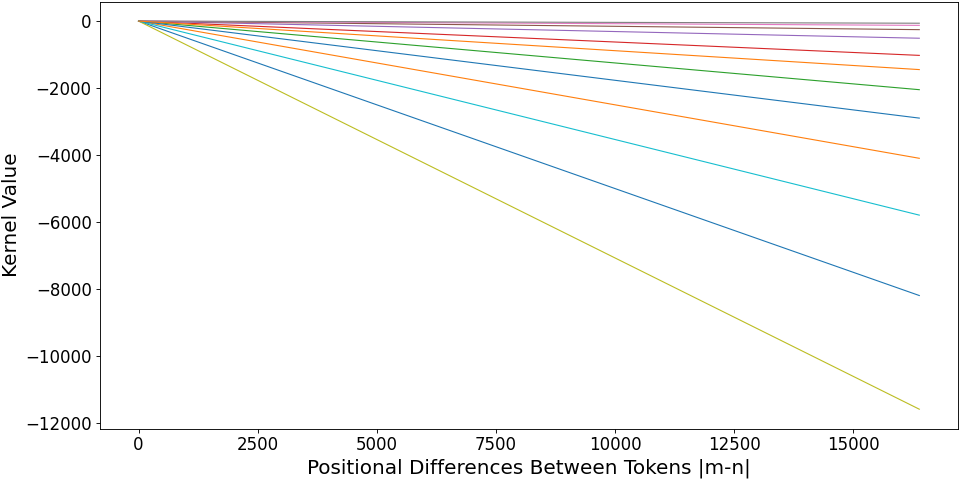}
\label{fig:alibi_rebuttal}
\end{figure*}

\subsection{Position-wise Perplexity Evaluation}
We plot the position-wise perplexity with evaluation length=4096 in Figure~\ref{fig:pos-wise-ppl-window-4096}. Please see Appendix~\ref{sec:posppl_16384} for similar length=16384 result. The evaluation is done by measuring the loss at each position in each sequence and averaging over the sequences.


We note that PPL@512 of KERPLE-log is the lowest among all model variants. We can derive several critical observations for evaluation length=4096 in Figure~\ref{fig:pos-wise-ppl-window-4096}: First, KERPLE-log lies below KERPLE-log-windowed@512, indicating its usage of more distant information than window attention: If our model does not use more information other than a fixed-window=512, the y-values after position=512 should overlap with the line windowed at 512. This is clearly not the case. In addition, the PPL of KERPLE-log continues to decrease till the end of 4096 positions (Not plateauing). Second, T5 lies below KERPLE-log-windowed@512 most of the time and fluctuates around KERPLE-log-windowed@512 after length=3000. It is still worse than KERPLE-log. Third, ALiBi lies above KERPLE-log-windowed@512 for almost all the positions, indicating that window attention might be a better choice than ALiBi.

Although window attention is a strong baseline, our KERPLE-log is almost like a free lunch compared to window attention: With only 24 additional learnable parameters (2 para. for each head), the almost same training speed, and the same train length=512 as window attention, it is able to achieve lower PPLs across different positions.


\begin{figure}[!ht]
    \centering   
    \caption{\textbf{Position-wise Perplexity on GitHub at Evaluation Length=4096 Compared to Window Attention@512.}}
    {{\includegraphics[width=\textwidth]{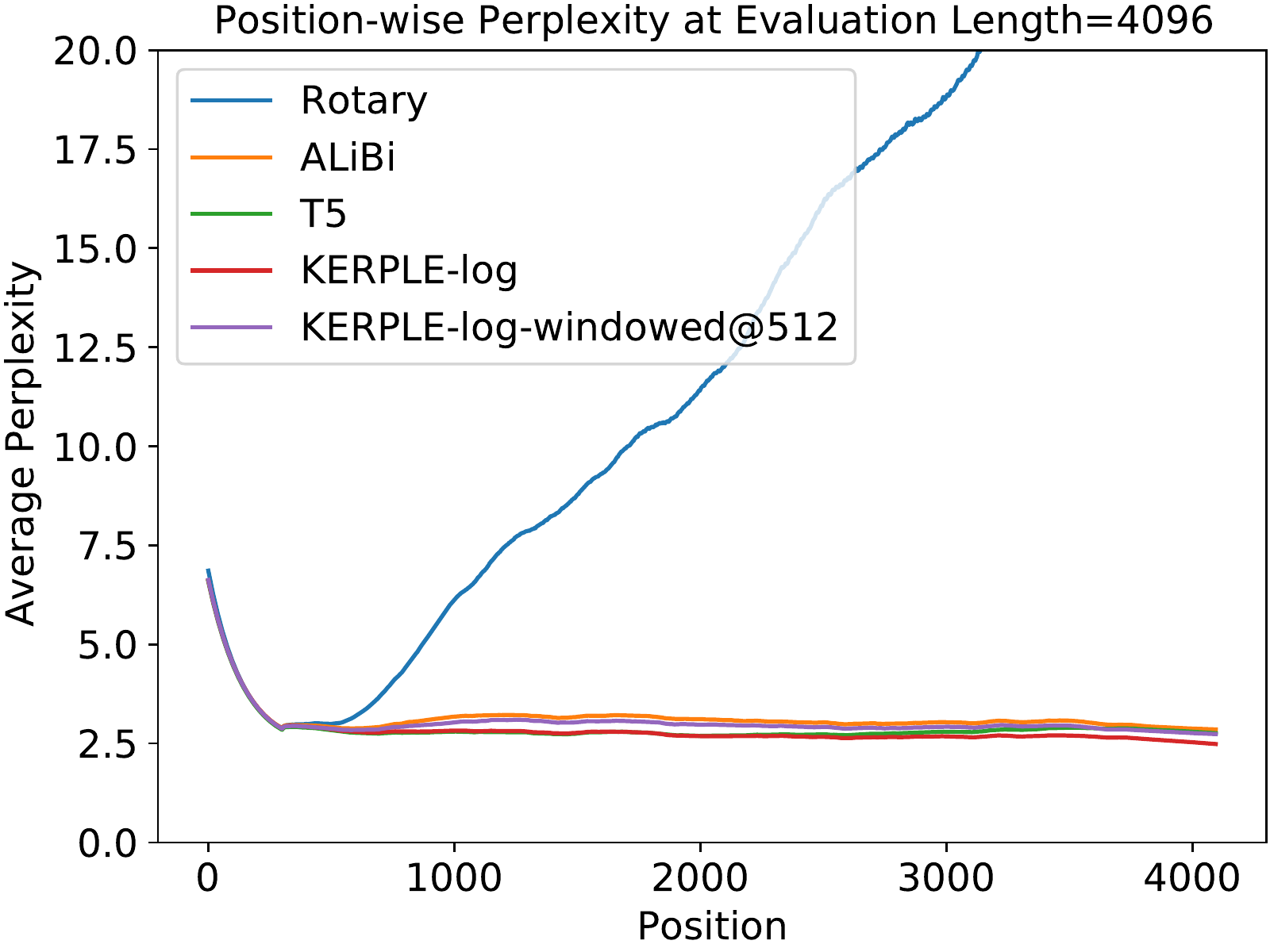} }}%
    \label{fig:pos-wise-ppl-window-4096}
\end{figure}

\section{Conclusion and Future Work}
\label{sec:conclusion}
A general framework, KERPLE, is proposed to kernelize relative positional embeddings for length extrapolation. At the core of this framework is the application of CPD kernels and the derivation of practical variants. We show that these CPD kernels can be implicitly converted to PD kernels, which keep the inner product interpretation of self-attention. We also demonstrate that the logarithmic variant achieves exceptional extrapolation performance on three large language modeling datasets. We believe our work paves the way for some interesting future directions that resolve our limitations. For instance, we can consider general kernel families and model non-monotonic effects due to positional differences. In addition, the use of learnable parameters in KERPLE might enable better generalization to inputs higher than one-dimensional. Last but not least, there is always room for improving memory efficiency by adjusting the model architecture and training procedure.

\section{Broader Impact}
\label{sec:broader_impact}
Our work develops a better understanding of relative positional embedding for transformers based on expressive kernel classes that adapt well to various datasets. The results apply to domains where the positional information is helpful in the modeling, e.g., natural language, programming language, and DNA/protein sequences for biology/medicine. The studies of transformers may have positive economic effects by enabling new tasks which cannot be done by humans or enhancing accuracy and efficiency. But inappropriate use can have negative societal impacts. These include job loss due to automation, the ethical challenges from improper text generation, and the privacy issues in the data collection process. These implications apply to any research on natural language processing and are not associated with any specific work.

\section{Acknowledgement}
We thank the anonymous reviewers for their insightful feedback and suggestions. We thank Princeton Research Computing for the technical support on the Della and the Adroit clusters. The third author acknowledges support from NSF MRI Award: 1919452.

\bibliographystyle{unsrtnat}
\bibliography{mybib}

\appendix
\setcounter{lemma}{0}

\section{Appendix}

\subsection{Proof of CPD Shift Lemma}
\label{appendix:cpd_shift_proof}
\begin{lemma}[CPD Shift Lemma]
	Let $k:\mathcal{X}\times\mathcal{X}\rightarrow\mathbb{R}$ be a conditionally positive definite (CPD) kernel. Then, there exists $c\geq 0$ such that $c+k(x,y)$ is a positive definite kernel.
\end{lemma}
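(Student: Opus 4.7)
The plan is to unpack the PD definition (Definition~\ref{def:kernel}) and reduce the claim to a finite-dimensional linear-algebra statement: for any finite test set $\{x_1,\dots,x_N\}\subset\mathcal X$, the matrix $K_c := K + cJ$ is positive semidefinite for some $c\ge 0$, where $K_{ij}=\tilde k(x_i,x_j)$ and $J=\mathbf{1}\mathbf{1}^T$ is the $N\times N$ all-ones matrix. This matches the constructive viewpoint of the geometric-sequence search described after Eq.~\eqref{eq:bias_kernel}.

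Given a fixed test set, I would orthogonally decompose any $x\in\mathbb R^N$ as $x=\alpha\mathbf{1}+v$ with $v\in V_0:=\{v:\mathbf{1}^T v=0\}$. Since $Jv=0$ and $J\mathbf{1}=N\mathbf{1}$, the quadratic form expands as
\begin{equation*}
x^T K_c x \;=\; \bigl(cN^2+\mathbf{1}^T K\mathbf{1}\bigr)\alpha^2 \;+\; 2\alpha\,\mathbf{1}^T K v \;+\; v^T K v.
\end{equation*}
The CPD hypothesis (Definition~\ref{def:cpd_kernel}) yields $v^T K v\ge 0$ for every $v\in V_0$, so the pure-$v$ term is already nonnegative. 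Viewing the right-hand side as a quadratic in $\alpha$ with $v$ fixed, nonnegativity for all $\alpha$ reduces to the leading coefficient $cN^2+\mathbf{1}^T K\mathbf{1}$ being nonneg (clearly achievable by large $c$) together with the discriminant inequality
\begin{equation*}
(\mathbf{1}^T K v)^2 \;\le\; \bigl(cN^2+\mathbf{1}^T K\mathbf{1}\bigr)\,(v^T K v) \quad\text{for every } v\in V_0.
\end{equation*}
I would finish by compactifying the unit sphere of $V_0$ and using continuity to bound the ratio $(\mathbf{1}^T K v)^2/(v^T K v)$, then absorb the resulting bound into a sufficiently large choice of $c$.

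The main obstacle is the degenerate case where some nonzero $v\in V_0$ satisfies $v^T K v=0$ while $\mathbf{1}^T K v\neq 0$, since then the ratio above blows up and no finite $c$ works. I would rule this out by combining the standing assumption $\tilde k(x,x)=0$ used in Section~\ref{sec:cpd} with the Hilbert-space representation in Fact~\ref{fact:cpd_metric}: writing $\tilde k(x,x')=-\|\phi(x)-\phi(x')\|^2$ expresses $K$ as $2G-g\mathbf{1}^T-\mathbf{1}g^T$ with $G$ Gram and $g_i=\|\phi(x_i)\|^2$, and a direct computation on $V_0$ reduces the degeneracy condition to a range-inclusion statement on $\{\phi(x_i)\}$ that feeds a Schur-complement argument. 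Since the lemma is only consumed through the Softmax normalization in Eq.~\eqref{eq:krpe-deriv}, which is invariant under additive constants, establishing the mere existence of $c$---not its explicit value---is exactly what the subsequent derivation needs.
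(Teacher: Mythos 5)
Your reduction to finite matrices, the orthogonal decomposition $x=\alpha\one+v$ with $v\in V_0$, and the resulting quadratic-in-$\alpha$/discriminant condition are all correct, and this route is cleaner than the paper's (which instead splits the unit sphere into $|v^\top\one|\ge\delta$ and $|v^\top\one|<\delta$ and treats the second region by a compactness-plus-perturbation argument). The genuine gap is the degenerate case you flag at the end: it cannot be ruled out, not even with the extra hypotheses you propose to import. First, the lemma assumes only that the kernel is CPD; the normalization $\tilde k(x,x)=0$ and the range condition $\tilde k\le 0$ required by Fact~\ref{fact:cpd_metric} are not among its hypotheses, so at best you would prove a weaker statement. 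Second, and more seriously, even under those hypotheses the degeneracy actually occurs. Take $\tilde k(x,y)=-(x-y)^2$ (CPD by Fact~\ref{fact:negdis_prop}, with $\phi$ the identity on $\mathbb{R}$), the three points $x=(0,2,1)$, and $v=(1,1,-2)$. Then $v^\top\one=0$ and $v^\top Kv=2\big(\sum_i v_ix_i\big)^2=0$, yet $\one^\top Kv=-6\neq 0$; evaluating the quadratic form of $K+cJ$ at $\alpha\one+tv$ gives $(9c-12)\alpha^2-12\alpha t$, which is negative for suitable $t$ no matter how large $c$ is. In your notation, $Gv=0$ forces $\sum_i v_i\phi(x_i)=0$ but does not force $g^\top v=\sum_i v_i\norm{\phi(x_i)}^2=0$, so the range-inclusion/Schur-complement step you sketch would have to establish something false.

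You should know that the paper's own proof stumbles at exactly the same point: its step (iii) claims that once $K$ is full rank, $v^\top Kv>0$ whenever $v^\top\one=0$, justified by the spectral expansion $\sum_i\lambda_i(v^\top u_i)^2$ with no $\lambda_i=0$. That inference is valid only if all $\lambda_i$ are positive, which is precisely what is not available (step (i) reduces to the case $\min_{\norm{v}=1}v^\top Kv<0$, so $K$ is indefinite, and the $3\times 3$ matrix above is full rank and defeats the step). So the obstacle you identified is not a loose end in your write-up but a counterexample to the lemma in this generality (and to Proposition~\ref{prop:dist-ker} at $p=2$). Your argument does close cleanly under an added hypothesis of strict conditional positive definiteness on each finite subset, i.e.\ $v^\top Kv>0$ for all nonzero $v\in V_0$: then compactness of the unit sphere of $V_0$ bounds $(\one^\top Kv)^2/(v^\top Kv)$ and a large $c$ absorbs it, exactly as you propose. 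Stating and using that hypothesis explicitly is the fix; trying to derive it from $\tilde k(x,x)=0$ is a dead end.
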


\begin{proof}
Let $K=[k(x_i,x_j)]_{i,j=1}^N$ be the matrix generated by $\{x_1,...,x_N\}$ with $N\in\mathbb{N}$. Consider 
$$f_c(v)= v^\top( c\one\one^\top +K)v=c(v^\top \one)^2 + v^\top K v.$$ We want to show there exists a large enough $c$ such that $f_c(v)\geq 0$ for all $v\in \{v:\norm{v}=1\}$.

\begin{enumerate}[label=\textbf{(\roman*)},leftmargin=15pt]
    \item \textbf{It is sufficient to consider $\pmb{a^*=\min_{v:\norm{v}=1}v^\top Kv<0.}$} 
    
    Let $a^*$ be the solution to the minimization:
    $$a^*=\min_{v:\norm{v}=1}v^\top Kv.$$
	Since $v^\top Kv$ is continuous in $v$ and $\{v:\norm{v}=1\}$ is compact (i.e., closed and bounded), $a^*$ must exist. If $a^*\geq0$, $K$ is positive semidefinite and $f_c(v)\geq 0$ for $c\geq 0$. Thus, without loss of generality, we assume $a^*< 0$.
	\item \textbf{It is sufficient to consider $\pmb{K}$ without zero eigenvalues (i.e., full rank).}
	
	If there exists $v_0$ such that $Kv_0=0$, then $c\geq 0 $ is enough to satisfy $f_c(v_0)\geq 0$. For any $v_1$ satisfying $v_1^\top v_0=0$, we have $(v_1+v_0)^\top K (v_1+v_0)=v_1^\top K v_1$. Therefore, whether there exists $c$ to have $f_c(v)\geq 0$ doesn't depend on the eigenvector corresponding to zero eigenvalue (if there is such a vector). This means it is enough to consider $K$ without zero eigenvalues.
	\item \textbf{It is sufficient to consider strict CPD.}
	
	By definition of conditional positive definiteness (CPD), we know $v^\top K v\geq0$ when $v^\top \one=0$. Since $K$ has no zero eigenvalue, we cannot have $v^\top K v=0$ when $v^\top \one=0$\footnote{By spectral decomposition, $v^\top K v = \sum_i\lambda_i (v^\top u_i)^2\geq 0$. Since there is no $\lambda_i=0$, the inequality is strict.}. This means the inequality is strict here: $v^\top K v>0$ when $v^\top \one=0$, and it is enough to consider strict CPD.
	
	\item \textbf{It is sufficient to show there exists (small enough) $\pmb{\delta>0}$ such that} $\pmb{v'\in T_\delta\Rightarrow v'^\top K v'>0}$.
	
	Note $T_\delta$ is defined as $$T_\delta=\{v':|v'^\top \one|<\delta,~\norm{v'}=1\}.$$
	For any $v'\in T_\delta$, if $v'$ satisfies $v'^\top K v'>0$, then $c\geq 0$ is enough to have $f_c(v')\geq 0$. Conversely, when $|v^\top \one|\geq\delta$ and $\norm{v}=1$, observe that
	$$f_c(v)=c(v^\top \one)^2 + v^\top K v\geq c(v^\top \one)^2 + a^*\geq c\delta^2 +a^*$$
	Then, $f_c(v)\geq 0$ when $c\geq\frac{-a^*}{\delta^2}$. Therefore, we need to prove $v'\in T_\delta\Rightarrow v'^\top K v'>0$ for small enough $\delta$.
	\item \textbf{Leveraging Continuity.} Consider $v'$ satisfying $\norm{v'-v}<\delta_2$ with $v\in S=\{v:\norm{v}=1,~v^\top \one=0\}$. Since $v^\top Kv$ is continuous in $v$ and $\norm{K}<\infty$, for any $\epsilon>0$ and any $v\in S$, a small enough $\delta_2>0$ gives $|v'^\top Kv'-v^\top Kv|<\epsilon$. 
	
	To see this, taking $v'=v+p$ with $\norm{p}<\delta_2$, we have
	$$|v'^\top Kv'-v^\top Kv|=|p^\top Kv + v^\top Kp+p^\top Kp|\underset{\norm{v}=1}{\leq}\norm{K}(2\norm{p}+\norm{p}^2)\leq \norm{K}(2\delta_2 +\delta_2^2).$$
	Therefore, $0<\delta_2<\sqrt{1+\epsilon/\norm{K}}-1$ is enough to have $|v'^\top Kv'-v^\top Kv|<\epsilon$. 
	
	By definition of strict CPD, we know $\min_{v\in S} v^\top Kv=\lambda>0$. Thus, take $\epsilon<\lambda$, a small enough $\delta_2$ gives $v'^\top Kv'>v^\top K v-\epsilon>=\lambda -\epsilon>0$. 
	In other words, there exists a small enough $\delta_2$ such that $v'^\top Kv'>0$ for $v'\in S_{\delta_2} =\{v':\norm{v'-v}<\delta_2,~v\in S\}$.
	\item \textbf{Proving $\pmb{\exists~\delta>0~\text{s.t.}~v'\in T_\delta\Rightarrow v'^\top K v'>0}$}.
	
	Due to (iv), we want to show $\exists~\delta>0~\text{s.t.}~v'\in T_\delta\Rightarrow v'^\top K v'>0$. We will prove by the conclusion of (v). Let $\norm{v'}=1$,~$v'^\top \one =r$ with $|r|<\delta$ and $v''=v'-\frac{r}{n}\one$. We have
	$$v''^\top \one=0,~~~\norm{v'-v''}=\frac{r}{\sqrt{n}},~~~\norm{v''}=\sqrt{\norm{v'-\frac{r}{n}\one}^2}=\sqrt{1-\frac{r^2}{n}}.$$
	
	Take $v=\frac{v''}{\norm{v''}}=\frac{v''}{q}$, where $q=\sqrt{1-\frac{r^2}{n}}$. We have $v^\top \one=0,~~~\norm{v}=1$ and
	\begin{align*}
        \norm{v'-v}^2=&\norm{(1-\frac{1}{q})v' + \frac{1}{q}\frac{r}{n}\one}^2=(1-\frac{1}{q})^2 + \frac{r^2}{nq^2}+2(1-\frac{1}{q})\frac{1}{q}\frac{r^2}{n}\\
        =&(1-\frac{1}{q})^2+\frac{r^2}{n}(\frac{2}{q}-\frac{1}{q^2})=\frac{1}{q^2}\Big( (q-1)^2 + \frac{r^2}{n}(2q-1) \Big)\\
        =&\frac{1}{1-\frac{r^2}{n}}\Big( 1-\frac{r^2}{n}-2q+1+ \frac{r^2}{n}2q - \frac{r^2}{n} \Big)\\
        =&\frac{1}{1-\frac{r^2}{n}}\Big( 1-\frac{r^2}{n}-2q(1-\frac{r^2}{n})+1-\frac{r^2}{n} \Big) = 2-2q\\
        =&2(1-\sqrt{1-\frac{r^2}{n}}) \approx 2(1-1+\frac{1}{2}\frac{r^2}{n})=\frac{r^2}{n}~~~~(\sqrt{1-x}\approx 1-\frac{x}{2}~\text{when~}|x|\ll 1)
    \end{align*}
	Thus, $\norm{v'-v}=O(\frac{|r|}{\sqrt{n}})\leq O(\frac{\delta}{\sqrt{n}})<\delta_2$ for small enough $\delta$. This implies that, with a small enough $\delta$, for any $v'\in T_\delta$, we can find $v\in S$ such that $\norm{v'-v}<\delta_2$. Thus, $v' \in S_{\delta_2}$, and by (v), we arrive at $v'^\top Kv'>0$.
\end{enumerate}
\end{proof}

\subsection{Shift-invariant Kernels with Bounded and Unbounded Ranges}
\label{appendix:alternative}

Definition~\ref{def:kernel} implies a shift-invariant kernel is generated by a univariate function $f:\mathcal{X}\rightarrow \mathbb{R}$. To characterize the set of valid univariate functions, we introduce the positive definite functions as below.
\begin{definition}[Positive definite function]
A (real) positive definite function is a function $f:\mathcal{X}\rightarrow\mathbb{R}$ such that for any integer $N$ and any set of $N$ points $\{x_i\in\mathcal{X}\}_{i=1}^N$, the $N\times N$ matrix $\bm A=[f(x_i-x_j)]_{i,j=1}^N$ is positive semidefinite.
\label{def:pd}
\end{definition}

We will interchange the ideas of shift-invariant kernels and positive definite functions because they are equivalent by definition. Any statement in positive definite functions can be translated into shift-invariant kernels, and vice versa. Because of this, we will use some facts about the positive definite functions to derive the shift-invariant kernels of our interest.

\paragraph{Generalizing Classical Bounded Shift-invariant Kernel.} In the literature, there have been studies on applying kernels in the attention mechanism \citep{tsai2019transformer,choromanski2021performers,peng2021random}. One of the most common approaches is to consider the Gaussian kernel:
$$k(m,n)=\exp(-\gamma (m-n)^2),~~~\gamma>0.$$
Note the Gaussian kernel is bounded ($k(m,n)\in (0,1]$ for the case above). To generalize it to a broader class of bounded shift-invariant kernels, observe that the Gaussian kernel generated by a positive definite function of the form $f(x)=\exp(-|x|^2)$. Since there is no strong reason to stick to the power of 2, one may generalize it to a broader class of positive definite functions as below.

\begin{fact}[Corollary 3 of \citet{schoenberg1938metric}]
$\exp (-|x|^p)$ is positive definite if $0<p\leq 2$ and not positive definite if $p>2$.
\label{fact:exp}
\end{fact}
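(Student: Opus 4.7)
\textbf{The plan} is to treat the two directions separately: for $0 < p \le 2$, build $\exp(-|x|^p)$ out of the CPD machinery developed in Section~\ref{sec:cpd} combined with the fact that the Gaussian kernel is PD on any Hilbert space; for $p > 2$, exhibit an explicit three-point, real-coefficient witness whose quadratic form is strictly negative.

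\textbf{Positive direction.} I would first verify that $\tilde k(x,y) := -|x-y|^p$ is a CPD kernel on $\mathbb R$. The case $p = 2$ is exactly Fact~\ref{fact:negdis_prop}. For $0 < p < 2$, I would apply Fact~\ref{fact:cpt_prop} with $\alpha = p/2 \in (0,1)$ to $\tilde k_0(x,y) = -|x-y|^2$, giving $-(-\tilde k_0)^{p/2} = -|x-y|^p$ CPD. Since $\tilde k \le 0$ and vanishes on the diagonal, Fact~\ref{fact:cpd_metric} supplies a Hilbert-space embedding $\phi$ with $\norm{\phi(x)-\phi(y)}^2 = |x-y|^p$, and hence
\[ \exp(-|x-y|^p) = \exp(-\norm{\phi(x)-\phi(y)}^2). \]
It then remains to show that the Gaussian kernel is PD on an arbitrary Hilbert space $\mathcal H$. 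Using $-\norm{u-v}^2 = 2\langle u,v\rangle - \norm u^2 - \norm v^2$, I would write
\[ \exp(-\norm{u-v}^2) = \exp(-\norm u^2)\,\exp(-\norm v^2)\,\sum_{k=0}^\infty \frac{2^k}{k!}\,\langle u,v\rangle^k, \]
a product of the rank-one PD kernel $f(u)f(v)$ with $f(u)=\exp(-\norm u^2)$ and a nonnegative combination of tensor-power kernels $\langle u,v\rangle^k$ (each PD because $\sum_{i,j} c_i c_j \langle u_i,u_j\rangle^k = \norm{\sum_i c_i u_i^{\otimes k}}_{\mathcal H^{\otimes k}}^2 \ge 0$), completing this direction.

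\textbf{Negative direction.} I would take the collinear points $x_1 = 0,\ x_2 = a,\ x_3 = 2a$ and coefficients $c = (1,-2,1)$, and directly compute
\[ Q(a) := \sum_{i,j=1}^3 c_i c_j \exp(-|x_i-x_j|^p) = 6 - 8\exp(-a^p) + 2\exp(-2^p a^p). \]
Taylor-expanding as $a \to 0^+$ yields $Q(a) = (8 - 2^{p+1})\,a^p + O(a^{2p})$. Since $2^{p+1} > 8$ exactly when $p > 2$, a sufficiently small $a > 0$ makes $Q(a) < 0$, violating Definition~\ref{def:pd}.

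\textbf{Main obstacle.} The conceptually delicate step is the Gaussian check in the positive direction: the CPD machinery dumps us into an abstract (possibly infinite-dimensional) Hilbert space, where one cannot appeal to a finite-dimensional Bochner-type argument, so the tensor-series expansion of $\exp$ is the cleanest way to get PDness without extra analytic hypotheses. For the negative direction, the only nontrivial choice is the ansatz: coefficients that sum to zero kill the constant and linear-in-$\exp$ contributions in the Taylor expansion and expose the $a^p$ coefficient whose sign flips precisely at $p = 2$; the symmetric triple $(1,-2,1)$ on an arithmetic progression is the minimal configuration with this property.
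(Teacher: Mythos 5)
Your proof is correct, but note first that the paper does not actually prove this statement: it is imported verbatim as Corollary~3 of \citet{schoenberg1938metric} and used as a black box, so there is no in-paper argument to compare against. What you have written is a legitimate derivation from the paper's own background results, plus a self-contained counterexample. For $0<p\le 2$: getting CPD-ness of $-|x-y|^p$ from Fact~\ref{fact:negdis_prop} together with Fact~\ref{fact:cpt_prop} at $\alpha=p/2$, invoking Fact~\ref{fact:cpd_metric} to obtain the Hilbert-space embedding, and then proving the Gaussian kernel PD on an arbitrary Hilbert space via the tensor-power expansion of $\exp(2\langle u,v\rangle)$ is all sound; the two steps you leave implicit are that pointwise limits of PD kernels are PD (to pass from the partial sums to the full series) and that the product with the rank-one kernel $f(u)f(v)$ is PD (Schur product theorem) --- both standard, but each deserves a sentence. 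One caveat on how much this buys you: the standard proof of Fact~\ref{fact:cpt_prop} in the cited literature itself routes through ``$\tilde k$ CPD $\Rightarrow\exp(t\tilde k)$ PD,'' which is essentially the theorem you are reproving, so your positive direction is a clean derivation \emph{modulo the paper's stated facts} rather than an argument independent of Schoenberg's machinery. The negative direction, by contrast, is genuinely elementary and complete: the second-difference witness $c=(1,-2,1)$ at $(0,a,2a)$ gives $Q(a)=6-8e^{-a^p}+2e^{-2^p a^p}=(8-2^{p+1})a^p+O(a^{2p})$, whose leading coefficient changes sign exactly at $p=2$, so a small $a$ violates Definition~\ref{def:pd} whenever $p>2$; this is the discrete analogue of the classical ``vanishing second derivative at the origin forces zero variance'' argument for characteristic functions, and it is a valid disproof of positive definiteness.
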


Fact~\ref{fact:exp} implies that, if one wants to find a class of bounded shift-invariant kernel (i.e., $k(m,n)$ is within some fixed interval for any $m,n$), then $k(m,n)=\exp(-a|m-n|^p)$ with $a>0$ and $p\in (0,2]$ may be of interest.

\paragraph{Constructing Unbounded Shift-invariant Kernels.} A limitation of Fact~\ref{fact:exp} is that it only generates kernels with a bounded range (here, the range is bounded in $(0,1]$). In situations where there are no explicit bounds, one might want to consider kernels with unbounded range. To construct such kernels, we utilize a kernel representation theorem presented in \citet{schoenberg1938metric}:
\begin{fact}[Theorem 4 of \citet{schoenberg1938metric}]
$f(x)$ is bounded away from zero ($f(x)>0$) and its positive powers $f(x)^\lambda$ ($\lambda>0$) are all positive definite if and only if $f(x)$ is of the form
$$f(x)=\exp(c+\psi(x)),$$
where $\psi(x)$ is positive definite and $c$ is a real constant.
\label{fact:represent}
\end{fact}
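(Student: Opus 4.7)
The plan is to prove both directions of the biconditional. Let me write $\psi(x):=\log f(x)-\log f(0)$ and $c:=\log f(0)$; the goal is then to show $\psi$ is (conditionally) positive definite in the appropriate sense while $f=\exp(c+\psi)$ recovers the representation.

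For the sufficiency direction ($\Leftarrow$), assume $f=\exp(c+\psi)$ with $\psi$ positive definite. For any $\lambda>0$, write $f^\lambda=e^{\lambda c}\,\exp(\lambda\psi)$. The scalar $e^{\lambda c}>0$ preserves positive definiteness. It remains to show that the exponential of a PD function is PD. This follows from the Taylor expansion $\exp(g)=\sum_{n\ge 0}g^n/n!$ combined with two closure properties: pointwise powers $g^n$ of a PD function are PD by the Schur product theorem (Hadamard products of PSD matrices are PSD), and pointwise-convergent nonnegative combinations of PD functions remain PD. Since $\lambda\psi$ is PD (positive scaling preserves PD), so is $\exp(\lambda\psi)$. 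Finally, $f$ is bounded away from zero because PD functions satisfy $|\psi(x)|\le \psi(0)$, making $\psi$ bounded, so $f=\exp(c+\psi)$ is trapped between two positive constants.

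For the necessity direction ($\Rightarrow$), assume $f>0$ is bounded away from zero and $f^\lambda$ is PD for every $\lambda>0$. The key idea is to extract $\psi$ as a limit. Consider
\[
\psi_\lambda(x):=\frac{f(x)^\lambda-f(0)^\lambda}{\lambda\,f(0)^\lambda}.
\]
Since $f^\lambda$ is PD with maximum $f(0)^\lambda$, the subtraction $f^\lambda-f(0)^\lambda$ is conditionally positive definite (apply the PD inequality to vectors with $\sum c_i=0$, where the constant shift drops out). Dividing by the positive scalar $\lambda f(0)^\lambda$ preserves CPD-ness. As $\lambda\to 0^+$, by L'Hôpital (or direct differentiation in $\lambda$), $\psi_\lambda(x)\to \log f(x)-\log f(0)=\psi(x)$ pointwise. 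Because the class of CPD functions is closed under pointwise limits, $\psi$ is at least CPD, and the representation $f=\exp(c+\psi)$ with $c=\log f(0)$ follows.

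The main obstacle is bridging the gap between CPD and PD: the limiting argument naturally yields $\psi$ as CPD, not PD as the statement demands. There are two plausible remedies. The first is to invoke the CPD Shift Lemma (Lemma~\ref{lemma:cpd_shift}) in reverse: since $\psi$ is bounded (being a pointwise limit of the uniformly bounded family $\psi_\lambda$, thanks to $f$ being bounded away from zero and above), there is a constant $c'$ such that $c'+\psi$ is PD; absorb $c'$ into the exponent by redefining $c\mapsto c-c'$ and $\psi\mapsto c'+\psi$. The second, which is Schoenberg's original route, is to appeal to the Bochner-type spectral representation of PD functions and show that the limiting spectral measure of $\psi_\lambda$ is a genuine positive measure (not merely one satisfying a conditional moment constraint), exploiting the hypothesis that \emph{all} positive powers $f^\lambda$ are PD rather than a single one. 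I would pursue the shift-lemma approach first, as it reuses infrastructure already developed in the paper and sidesteps spectral machinery.
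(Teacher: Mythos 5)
The paper offers no proof of this statement: it is imported verbatim as Theorem~4 of \citet{schoenberg1938metric}, so there is no in-paper argument to compare against. Your reconstruction is essentially Schoenberg's own. The sufficiency direction (Schur product theorem plus the exponential series, together with $|\psi(x)|\le\psi(0)$ to conclude that $f=\exp(c+\psi)$ is trapped between two positive constants) is complete and correct. The necessity direction via $\psi_\lambda(x)=\bigl(f(x)^\lambda-f(0)^\lambda\bigr)/\bigl(\lambda f(0)^\lambda\bigr)\to\log f(x)-\log f(0)$ is the standard route, and you argue correctly both that each $\psi_\lambda$ is CPD and that CPD is closed under pointwise limits.

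The one step that needs more care is the CPD-to-PD bridge you flag at the end. Invoking Lemma~\ref{lemma:cpd_shift} is not quite enough as that lemma is actually proved: its argument fixes a finite point set $\{x_1,\dots,x_N\}$ and produces a constant depending on that configuration, whereas here you need a single $c'$ such that $c'+\psi$ is PD uniformly over all finite point sets. For unbounded CPD kernels no such uniform constant can exist (e.g.\ $-|x-y|^2$: a PD kernel must satisfy $|k(x,y)|\le\sqrt{k(x,x)\,k(y,y)}$, so $c-|x-y|^2$ fails for $|x-y|$ large), so the lemma cannot be applied as a black box. What saves you is exactly the observation you make in passing: $\epsilon\le f\le f(0)$ forces $\psi=\log f-\log f(0)$ to be bounded, and for \emph{bounded} CPD functions the uniform shift does exist --- this is the classical characterization that a conditionally positive definite function is bounded if and only if it is a positive definite function plus a constant (see \citet{berg1984harmonic}). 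So the architecture of your proof is right, but the last step should cite or prove that bounded-CPD characterization rather than lean on the paper's shift lemma; alternatively, your second remedy via the spectral representation is Schoenberg's original route and sidesteps the issue entirely.
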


Since Fact~\ref{fact:represent} works for non-negative kernels, we combine it with Fact~\ref{fact:exp} and show the following class of shift-invariant kernel with an unbounded range.
\begin{proposition}[Kernel from Distance Powers]
	For any $p\in (0,2]$, there exists $c_{\min}\in\mathbb{R}$ such that for any $c\geq c_{\min}$, $$k(m,n)=c-|m-n|^p,$$ 
	is a positive definite kernel. When $p>2$, there is no $c$ to make $k(m,n)$ positive definite.
	\label{prop:dist-ker}
\end{proposition}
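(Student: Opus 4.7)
My plan is to treat the two halves of the proposition separately, both of which should follow fairly directly from machinery already developed in Section~\ref{sec:cpd}.

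For the forward direction ($0 < p \leq 2$), the cleanest route is to apply Corollary~\ref{cor:cpd_examples}(a) with $a = 1$, which immediately gives that $\tilde{k}(m,n) = -|m-n|^p$ is a CPD kernel. Lemma~\ref{lemma:cpd_shift} then supplies some $c_{\min} \geq 0$ such that $c_{\min} - |m-n|^p$ is PD. To upgrade this to all $c \geq c_{\min}$, I would simply write $c - |m-n|^p = (c_{\min} - |m-n|^p) + (c - c_{\min})$ and note that the second summand is a nonnegative constant (hence a PD kernel), so the sum is PD. An alternative path, closer to the ``Alternative Proof'' paragraph preceding the proposition, is to combine Fact~\ref{fact:exp} (so that $e^{-\lambda|x|^p}$ is PD for every $\lambda > 0$ when $0 < p \leq 2$) with Fact~\ref{fact:represent}: the latter forces a decomposition $e^{-|x|^p} = \exp(c_0 + \psi(x))$ with $\psi$ PD, which rearranges to $\psi(x) = -c_0 - |x|^p$ being a PD function, giving $c_{\min} = -c_0$. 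Either route works; I would lean on the Lemma~\ref{lemma:cpd_shift} version because it uses only facts proven in the main text.

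For the converse ($p > 2$), I would argue by contradiction. Suppose some $c$ makes $k(m,n) = c - |m-n|^p$ PD. Restricting the PD quadratic form to coefficient vectors $(c_1,\ldots,c_N)$ with $\sum_i c_i = 0$ annihilates the constant $c$ and leaves $\sum_{i,j} c_i c_j (-|m_i - m_j|^p) \geq 0$, so $-|m-n|^p$ is a CPD kernel. Since it is non-positive and vanishes on the diagonal, Fact~\ref{fact:cpd_metric} yields a Hilbert-space embedding $\phi$ with $\|\phi(m) - \phi(n)\|^2 = |m-n|^p$. I would then test the three points $0,1,2 \in \mathbb{R}$: the Hilbert-space triangle inequality gives $2^{p/2} = \|\phi(0) - \phi(2)\| \leq \|\phi(0) - \phi(1)\| + \|\phi(1) - \phi(2)\| = 1 + 1 = 2$, forcing $p \leq 2$ and contradicting the assumption.

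The only mildly delicate step is the opening observation in the converse, namely that PD-ness of $c - |m-n|^p$ forces CPD-ness of $-|m-n|^p$; this is the step where the shift-constant is absorbed by the constraint $\sum_i c_i = 0$. Everything else is routine application of the facts catalogued earlier, and in particular no new analytic machinery beyond the triangle inequality in Hilbert space is needed. I do not anticipate any serious obstacle.
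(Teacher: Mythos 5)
Your proposal is correct, and the converse half takes a genuinely different route from the paper. For the forward direction you offer both the paper's actual argument (Fact~\ref{fact:exp} plus the representation theorem Fact~\ref{fact:represent}, which forces $\exp(-|x|^p)=\exp(c'+\psi(x))$ with $\psi$ PD and yields $c_{\min}=-c'$) and an alternative via Corollary~\ref{cor:cpd_examples}(a) and Lemma~\ref{lemma:cpd_shift}; the final step of absorbing the excess $c-c_{\min}$ as a nonnegative constant kernel is exactly the paper's $(c-c_{\min})\one\one^\top\succeq 0$ computation, so the two routes differ only in which black box produces $c_{\min}$ (note that the Lemma~\ref{lemma:cpd_shift} route gives no handle on the value of $c_{\min}$, whereas the Schoenberg route ties it to an explicit representation). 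For $p>2$ the paper argues that PD-ness of $c-|m-n|^p$ would make $\exp(c-|m-n|^p)=e^c\exp(-|m-n|^p)$ PD, contradicting the negative half of Fact~\ref{fact:exp}; you instead pass from PD-ness of $c-|m-n|^p$ to CPD-ness of $-|m-n|^p$ (the constant is correctly annihilated by $\sum_i c_i=0$), invoke Fact~\ref{fact:cpd_metric} to get a Hilbert-space embedding with $\norm{\phi(m)-\phi(n)}^2=|m-n|^p$, and derive $2^{p/2}\le 2$ from the triangle inequality on three collinear points. Your version is more elementary and self-contained: it does not consume the ``not PD for $p>2$'' half of Schoenberg's Corollary~3, and it makes the geometric obstruction (non-embeddability of $|m-n|^{p/2}$ into Hilbert space for $p>2$) explicit, at the cost of needing the exponentiation-free Fact~\ref{fact:cpd_metric} machinery. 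Both arguments are sound given the cited facts.
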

\begin{proof}
Due to Fact~\ref{fact:exp}, we know $\exp(-|x|^p)$ is positive definite when $p\in(0,2]$. Since $\exp(-|x|^p)>0$ and $\exp(-\lambda|x|^p)$ is positive definite for any $\lambda >0$\footnote{$\exp(-\lambda|x|^p)=\exp(-|\lambda^{1/p} x|^p)$ is a constant rescaling of $\exp(-|x|^p)$ and therefore is positive definite.}, Fact~\ref{fact:represent} implies there exists a $c'\in\mathbb{R}$ and a positive definite $\psi(x)$ such that
$$\exp(-|x|^p)=\exp(c'+\psi(x)).$$
In other words, $-c'-|x|^p$ is a positive definite function. Take $c_{\min}=-c'$. We see that $c_{\min}-|m-n|^p$ is a shift-invariant positive definite kernel. Finally, let $k(m,n)=c-|m-n|^p$ with $c\geq c_{\min}$. The $N\times N$ matrix $[k(x_i,x_j)]_{i,j=1}^N$ generated by $k(m,n)$ on points $\{x_i\in\mathbb{R}\}_{i=1}^N$ obeys
$$[k(x_i,x_j)]_{i,j=1}^N = [c_{\min}-|x_i-x_j|^p]_{i,j=1}^N + (c-c_{\min})\one\one^\top \succeq (c-c_{\min})\one\one^\top \succeq 0,$$
where $\succeq$ is the Loewner order and $\one = [1,...,1]^\top$ in the $N$-dimensional vector with all ones. This shows $k(m,n)$ is a shift-invariant positive definite kernel when $0<p\leq 2$. The conclusion on $p>2$ is proved by contradiction. When $p>2$, if there exists a $c$ such that $k(m,n)$ is positive definite, then $\exp(k(m,n))$ is positive definite, which contradicts to the case of $p>2$ in Fact~\ref{fact:exp}.
\end{proof}
Prop.~\ref{prop:dist-ker} introduces a kernel with unbounded range ($k(m,n)\in (\infty,c]$) and is adapted from the p-th power of the distance $|m-n|$. Since the distance is a notion of "dissimilarity", $-|m-n|^p$ becomes a notion of "similarity", which gives a sense of kernel. Thereby, we can interpret the constant $c$ as the required value to shift $-|m-n|^p$ such that $c-|m-n|^p$ becomes a positive definite kernel.

In fact, $-|m-n|^p$ is a conditional positive definite kernel for $p\in(0,2]$ \citet{Schplkopf2000cpd}. Therefore, the fact that $-|m-n|^p$ can become a positive definite kernel by shifting is not a coincidence, as it has already had an intimate relation to positive definite kernels.

\subsection{Experiments on Gaussian-like Kernels}
\label{appendix:experi_gauss}
Since the prior work on shift-invariant kernels mainly focuses on Gaussian kernels, we present preliminary experiments on Gaussian-like kernels. Compared with section~\ref{sec:experi_results}, the perplexities of these kernels are large at every extrapolation length. This verifies our previous assertion that the Gaussian-like kernels have limited extrapolation ability.

Because the kernel can be used as a weight or a bias, we consider four kinds of the composite kernel (see section~\ref{sec:krpe}) as follows.
\begin{enumerate}[topsep=-3pt, itemsep=-1pt, leftmargin=22mm]
    \item[(2-para-bias)] $r_1,r_2>0$.\\
    $k^{\text{comp}}([\bm q_m,m],[\bm k_n,n])= \bm q_m^\top \bm k_n + r_1\exp(-r_2|m-n|^2)$.
    \item[(3-para-bias)] $r_1,r_2>0$ and $0<r_3\leq 2$. \\
    $k^{\text{comp}}([\bm q_m,m],[\bm k_n,n])= \bm q_m^\top \bm k_n + r_1\exp(-r_2|m-n|^{r_3})$.
    \item[(1-para-wht)] $r_1>0$.\\
    $k^{\text{comp}}([\bm q_m,m],[\bm k_n,n])= \bm q_m^\top \bm k_n \cdot \exp(-r_1|m-n|^2)$.
    \item[(2-para-wht)] $r_1>0$ and $0<r_2\leq 2$.\\
    $k^{\text{comp}}([\bm q_m,m],[\bm k_n,n])= \bm q_m^\top \bm k_n \cdot \exp(-r_1|m-n|^{r_2})$.
\end{enumerate}
(2-para-bias) and (1-para-wht) are the settings where we put the Gaussian kernel as a bias and a weight, respectively. (3-para-bias) and (2-para-wht) generalize these settings by considering a learnable power between 0 and 2. Note we must constrain the power in (0,2]; otherwise, the function is not positive definite. See Fact~\ref{fact:exp} for details. 

These composite kernel $k^{\text{comp}}$ are plugged into the KERPLE framework, Eq.~\eqref{eq:krpe-deriv}, and are evaluated on OpenWebText2, GitHub, and ArXiv datasets. Table~\ref{tab:gauss-like} shows the Gaussian-like kernel is better to be a weight instead of a bias. As discussed in Appendix \ref{appendix:alternative}, the Gaussian-like kernels are bounded. To some extent, this implies that the bounded positive kernel can model a weight. However, compared with section~\ref{sec:experi_results} the Gaussian-like kernels have limited advantages in extrapolation. Although the performance might be improved if the power of $\exp(-|x|^p)$ is relaxed from $p=2$ to $p\in(0,2]$, still it cannot be as good as the logarithmic variant as we demonstrate in section~\ref{sec:experi_results}. Therefore, while the Gaussian kernel is frequently used in the literature, we need a better class of shift-invariant kernels to tackle the length extrapolation challenge.

\begin{table}[!ht]
    \centering
    \caption{\textbf{Extrapolation of Gaussian-like kernels on OpenWebText2, GitHub, and ArXiv.} All models are trained for 50k steps with training length 512 and five random seeds.}
    \begin{tabular}{@{\extracolsep{3pt}}lcccc}
    \hline\hline
    \multirow{2}{*}{Extrp.} & \multicolumn{4}{c}{OpenWebText2}\\
     \cline{2-5}
     &   1-para-wht & 2-para-wht & 2-para-bias & 3-para-bias \\ \hline
    512 & 33.8 $\pm$ 1.1 & 24.8 $\pm$ 0.9 & 58.4 $\pm$ 71.6 & 26.4 $\pm$ 0.5\\
1024 & 32.5 $\pm$ 0.8 & 23.0 $\pm$ 0.8 & 88.7 $\pm$ 62.6 & 75.3 $\pm$ 37.8\\
2048 & 34.1 $\pm$ 0.6 & 22.7 $\pm$ 0.4 & 406 $\pm$ 101 & 2629 $\pm$ 4024\\
4096 & 35.6 $\pm$ 0.9 & 22.6 $\pm$ 0.6 & 2590 $\pm$ 3211 & 37557 $\pm$ 67936\\
8192 & 39.2 $\pm$ 1.1 & 23.2 $\pm$ 0.3 & 10829 $\pm$ 18855 & 189216 $\pm$ 369499\\
    \hline\hline
    \multirow{2}{*}{Extrp.} & \multicolumn{4}{c}{GitHub}\\
     \cline{2-5}
     &   1-para-wht & 2-para-wht & 2-para-bias & 3-para-bias \\ \hline
    512 & 7.78 $\pm$ 0.48 & 3.56 $\pm$ 0.23 & 4.08 $\pm$ 0.85 & 3.67 $\pm$ 0.22\\
1024 & 7.85 $\pm$ 0.40 & 3.19 $\pm$ 0.17 & 4.63 $\pm$ 0.59 & 4.23 $\pm$ 0.57\\
2048 & 8.08 $\pm$ 0.21 & 3.01 $\pm$ 0.09 & 18.8 $\pm$ 6.8 & 20.0 $\pm$ 4.8\\
4096 & 8.47 $\pm$ 0.43 & 2.93 $\pm$ 0.09 & 75.8 $\pm$ 32.2 & 94.0 $\pm$ 24.7\\
8192 & 9.41 $\pm$ 0.75 & 3.05 $\pm$ 0.20 & 207 $\pm$ 110 & 261 $\pm$ 86\\
    \hline\hline
    \multirow{2}{*}{Extrp.} & \multicolumn{4}{c}{ArXiv}\\
     \cline{2-5}
     &   1-para-wht & 2-para-wht & 2-para-bias & 3-para-bias \\ \hline
    512 & 10.6 $\pm$ 0.4 & 6.18 $\pm$ 0.25 & 6.73 $\pm$ 0.30 & 7.12 $\pm$ 1.43\\
1024 & 10.7 $\pm$ 0.2 & 5.73 $\pm$ 0.11 & 7.07 $\pm$ 0.63 & 7.27 $\pm$ 0.69\\
2048 & 10.8 $\pm$ 0.3 & 5.35 $\pm$ 0.15 & 20.4 $\pm$ 9.3 & 23.5 $\pm$ 8.4\\
4096 & 11.6 $\pm$ 0.3 & 5.44 $\pm$ 0.14 & 80.6 $\pm$ 49.4 & 131 $\pm$ 140\\
8192 & 12.1 $\pm$ 0.2 & 5.50 $\pm$ 0.27 & 220 $\pm$ 138 & 437 $\pm$ 591\\
    \hline\hline
    \end{tabular}
    \label{tab:gauss-like}
\end{table}

\begin{table}[!ht]
    \centering
    \caption{\textbf{Training Time Comparison for Gaussian-like Kernels on GitHub.} }
    \begin{tabular}{lcccc}
    \hline\hline
     & 1-para-wht & 2-para-wht & 2-para-bias & 3-para-bias \\ \hline
    sec/step & 0.326 & 0.327 & 0.324 & 0.351\\
    \hline\hline
    \end{tabular}
    \label{tab:gauss_speed}
\end{table}

\subsection{Experiments on Large Model, Longer Training Length, and Wikitext-103}
\label{appendix:more-experi}

In this subsection, we present additional experiments on (a) large models, (b) longer training length, and (c) Wikitext-103. Below is the summary of the experiments.

\begin{enumerate}[label=(\alph*), topsep=-3pt, itemsep=-1pt, leftmargin=6mm]
\item The 1.3B large model is trained on a machine with two NVIDIA A100 GPU with 40 GB of memory. We adopt almost all configurations of XL GPT-NeoX\footnote{\url{https://github.com/EleutherAI/gpt-neox/blob/main/configs/XL.yml}}, except that we change the train-micro-batch-size to 16, model-parallel-size to 2, seq-length to 512, and max-position-embeddings to 512. Table~\ref{tab:large_model_configs} summarizes the configurations of the 1.3B model.
\item The 162M Model with training sequence length=1024 follows the same configurations as the ones in Table~\ref{tab:model_configs} except that the train seq. length is changed to 1024.
\item The Wikitext-103 model is implemented on ALiBi's GitHub\footnote{\url{https://github.com/ofirpress/attention_with_linear_biases}} with exactly the same configurations (247M parameters), except that the function buffered\_future\_mask() at line 1011 of attention\_with\_linear\_biases/fairseq/models/transformer.py is adapted to our KERPLE-log. 
\end{enumerate}
\begin{table}[!ht]
    \centering
    \setlength{\tabcolsep}{3pt}
    \caption{\textbf{1.3B Model Configurations.}}
    \begin{tabular}{ccccc}
        \hline\hline
         \# Layers & Hidden Size & \# Attention Heads & Train Seq. Len. & \# Trainable Params.\\
         24 & 128 & 16 & 512 & ~1.3B\\ \hline
         Optimizer & Batch Size & Train Steps & Precision & \# Trainable Params. for RPEs\\
         Adam (lr 2e-4) & 32 & 150,000 & float16 & 48\\
         \hline\hline
    \end{tabular}
    \label{tab:large_model_configs}
\end{table}

Table~\ref{tab:large-scale-and-long-length} shows the results on the large model (1.3B). Compared with the small model results in Table~\ref{tab:openweb-github-arxiv}, we see that T5 bias becomes weaker than KERPLE-log and ALiBi, and KERPLE-log remains stronger than ALiBi on GitHub and ArXiv datasets. This is explained by the tendency of overfitting. Observe that both T5 and KERPLE learn the positional embeddings while ALiBi uses fixed ones. T5 and KERPLE have a higher tendency of overfitting. A larger model (1.3B > 162M) or a noisy dataset (OpenWebText2 > GitHub, ArXiv) posits a higher risk of overfitting. Hence, we see that T5 bias is weak on a large model, and KERPLE-log only extrapolates well on GitHub and ArXiv.


Again, Table~\ref{tab:large-scale-and-long-length} shows the results on long training length (1024). compared with the short training length (512) in Table~\ref{tab:openweb-github-arxiv}, KERPLE-log remains better than ALiBi and T5 bias, especially on longer evaluation length. This shows the robustness of KERPLE-log over different training lengths.

Table~\ref{tab:wikitext103} compares KERPLE-log with ALiBi using ALiBi's implementation and configurations. The results show that KERPLE-log is superior to ALiBi on Wikitext-103.

\begin{table}[!ht]
    \setlength{\tabcolsep}{2pt}
    \centering
    \caption{\textbf{Perplexity Comparison for Large Models (1.3B) and Long Training Length (1024) on GitHub, ArXiv, OpenWebText2.} Due to the time constraint and limited computing resources, we are not able to obtain the numbers for the large model (1.3B) on OpenWebText2 for now. All models are trained with five random seeds. $x^\dagger$ means our log variant is statistically significantly~\emph{better} than $x$. The test used is paired two-sided t-test with $\alpha=0.05$.}
    \begin{tabular}[t]{@{\extracolsep{3pt}}lccc}
    \hline\hline
    \multicolumn{4}{c}{162M Model. Train length, steps=1024, 50k.}\\
    \hline\hline
    \multirow{2}{*}{Extrp.} & \multicolumn{3}{c}{GitHub}\\
     \cline{2-4}
     &   KERPLE-log & ALiBi & T5 bias \\ \hline
     512 & - & - & -\\
    1024 & 2.83 $\pm$ 0.16 & 2.84 $\pm$ 0.16$^\dagger$ & \textbf{2.81 $\pm$ 0.16}\\
2048 & 2.70 $\pm$ 0.07 & 2.82 $\pm$ 0.07$^\dagger$ & \textbf{2.68 $\pm$ 0.07}\\
4096 & \textbf{2.53 $\pm$ 0.04} & 2.77 $\pm$ 0.06$^\dagger$ & 2.54 $\pm$ 0.04\\
8192 & \textbf{2.42 $\pm$ 0.03} & 2.74 $\pm$ 0.02$^\dagger$ & 2.57 $\pm$ 0.06$^\dagger$\\
16384 & \textbf{2.48 $\pm$ 0.11} & 2.80 $\pm$ 0.11$^\dagger$ & 3.10 $\pm$ 0.34$^\dagger$\\
    \hline\hline
    \multirow{2}{*}{Extrp.} & \multicolumn{3}{c}{ArXiv}\\
     \cline{2-4}
     &   KERPLE-log & ALiBi & T5 bias \\ \hline
    512 & - & - & -\\
1024 & 5.23 $\pm$ 0.09 & 5.26 $\pm$ 0.09 & \textbf{5.20 $\pm$ 0.10}\\
2048 & 4.76 $\pm$ 0.12 & 4.98 $\pm$ 0.18$^\dagger$ & \textbf{4.74 $\pm$ 0.12}\\
4096 & \textbf{4.75 $\pm$ 0.10} & 5.31 $\pm$ 0.13$^\dagger$ & 4.97 $\pm$ 0.27\\
8192 & \textbf{4.54 $\pm$ 0.10} & 5.25 $\pm$ 0.15$^\dagger$ & 6.55 $\pm$ 0.97$^\dagger$\\
16384 & \textbf{4.62 $\pm$ 0.15} & 5.35 $\pm$ 0.19$^\dagger$ & 16.0 $\pm$ 4.77$^\dagger$\\
    \hline\hline
    \multirow{2}{*}{Extrp.} & \multicolumn{3}{c}{OpenWebText2}\\
     \cline{2-4}
     & KERPLE-log & ALiBi & T5 bias \\ \hline
512 &    -  & - & -\\
1024 & 19.2 $\pm$ 0.1 & 19.3 $\pm$ 0.2 & \textbf{19.1 $\pm$ 0.1}\\
2048 & 19.3 $\pm$ 0.2 & 19.5 $\pm$ 0.1 & \textbf{19.2 $\pm$ 0.2}\\
4096 & \textbf{18.6 $\pm$ 0.3} & 19.0 $\pm$ 0.3$^\dagger$ & 19.2 $\pm$ 0.4$^\dagger$\\
8192 & \textbf{18.7 $\pm$ 0.5} & 19.3 $\pm$ 0.4$^\dagger$ & 24.0 $\pm$ 1.1$^\dagger$\\
16384 & \textbf{18.8 $\pm$ 0.5} & 19.2 $\pm$ 0.3$^\dagger$ & 50.8 $\pm$ 6.5$^\dagger$\\
\hline\hline
    \end{tabular}
    \begin{tabular}[t]{ccc}
    \hline\hline
    \multicolumn{3}{c}{1.3B Model. Train length, steps=512, 150k.}\\
    \hline\hline
\multicolumn{3}{c}{GitHub}\\
     \cline{1-3}
KERPLE-log & ALiBi & T5 bias \\ \hline
\textbf{2.88 $\pm$ 0.11} & \textbf{2.88 $\pm$ 0.11} & 2.93 $\pm$ 0.11$^\dagger$\\
\textbf{2.60 $\pm$ 0.12} & 2.62 $\pm$ 0.11$^\dagger$ & 2.64 $\pm$ 0.11$^\dagger$\\
\textbf{2.44 $\pm$ 0.05} & 2.58 $\pm$ 0.05$^\dagger$ & 2.47 $\pm$ 0.07$^\dagger$\\
\textbf{2.46 $\pm$ 0.11} & 2.65 $\pm$ 0.12$^\dagger$ & 2.49 $\pm$ 0.12\\
\textbf{2.44 $\pm$ 0.13} & 2.57 $\pm$ 0.13$^\dagger$ & 2.57 $\pm$ 0.13$^\dagger$\\
\textbf{2.60 $\pm$ 0.07} & 2.61 $\pm$ 0.07 & 3.16 $\pm$ 0.35$^\dagger$\\
    \hline\hline
\multicolumn{3}{c}{ArXiv}\\
     \cline{1-3}
KERPLE-log & ALiBi & T5 bias \\ \hline
\textbf{5.56 $\pm$ 0.15} & 5.58 $\pm$ 0.16 & 5.62 $\pm$ 0.15$^\dagger$\\
\textbf{4.87 $\pm$ 0.07} & 4.94 $\pm$ 0.07$^\dagger$ & 4.92 $\pm$ 0.06$^\dagger$\\
\textbf{4.50 $\pm$ 0.16} & 4.87 $\pm$ 0.17$^\dagger$ & 4.55 $\pm$ 0.16$^\dagger$\\
\textbf{4.45 $\pm$ 0.06} & 4.97 $\pm$ 0.13$^\dagger$ & 4.53 $\pm$ 0.08$^\dagger$\\
\textbf{4.47 $\pm$ 0.20} & 4.94 $\pm$ 0.16$^\dagger$ & 4.65 $\pm$ 0.15$^\dagger$\\
\textbf{4.65 $\pm$ 0.24} & 4.94 $\pm$ 0.07 & 5.25 $\pm$ 0.26$^\dagger$\\
    \hline\hline
    \multicolumn{3}{c}{OpenWebText2}\\
     \cline{1-3}
KERPLE-log & ALiBi & T5 bias \\ \hline
\textbf{17.5 $\pm$ 0.3} & 17.5 $\pm$ 0.4 & 17.8 $\pm$ 0.3$^\dagger$\\
\textbf{16.6 $\pm$ 0.6} & 16.7 $\pm$ 0.6 & 16.9 $\pm$ 0.6$^\dagger$\\
\textbf{16.2 $\pm$ 0.4} & 16.4 $\pm$ 0.4$^\dagger$ & 16.7 $\pm$ 0.4$^\dagger$\\
\textbf{16.4 $\pm$ 0.8} & 16.5 $\pm$ 0.5 & 18.0 $\pm$ 0.9$^\dagger$\\
16.9 $\pm$ 0.7 & \textbf{16.5 $\pm$ 0.1} & 22.7 $\pm$ 3.7$^\dagger$\\
17.8 $\pm$ 1.2 & \textbf{16.5 $\pm$ 0.3} & 37.1 $\pm$ 13.1$^\dagger$\\
    \hline\hline
    \end{tabular}
    \label{tab:large-scale-and-long-length}
\end{table}

\begin{table}[!ht]
    \caption{\textbf{Perplexity Comparison on Wikitext-103.} To ensure a fair comparison, the model (247M) is trained on ALiBi's codebase with exactly the same configurations except for the positional embeddings. The results show that KERPLE-log is superior to ALiBi on Wikitext-103.}
    \centering
    \begin{tabular}{lccccc|cc}
    \hline\hline
      & \multicolumn{5}{c}{train length 512}&\multicolumn{2}{c}{train length 2048}\\
     Extrp. length & 512 & 1024 & 1536 & 2048 & 3072 & 2048 & 3072\\
     \hline
     ALiBi    &  19.73 & 18.81 & 18.50 & 18.48 & 18.40 & 17.91& 17.64\\
     KERPLE-log    &  \textbf{19.69} & \textbf{18.76} & \textbf{18.37} & \textbf{18.29} & \textbf{18.24} & \textbf{17.84} & \textbf{17.56}\\
     \hline\hline
    \end{tabular}
    \label{tab:wikitext103}
\end{table}

\subsection{Additional Analyses}
\label{sec:analysis}
Since the power and logarithmic variants derived from KERPLE achieve superior performance on length extrapolation across various datasets, we investigate the underlying reason by visualizing the \emph{effective length} as shown in Figure~\ref{fig:cutoff}. The visualization works in the following procedure.
\begin{enumerate}[leftmargin=5mm]
    \item For each training dataset, the learnable parameters $(r_1^{(h)},...,r_\ell^{(h)})$ associated with each head $h$ (12 in total) are extracted from the model checkpoint. The CPD kernel at head $h$ is $\tilde{k}^{(h)}=\tilde{k}_{r_1^{(h)},...,r_\ell^{(h)}}.$
    Both the power and the logarithmic variants in corollary \ref{cor:cpd_examples} undergo a similar procedure. The only difference is that their $\tilde{k}$'s are different.
    
    \item For each head $h$, we compute the~\emph{effective length} of $\tilde{k}^{(h)}$ as $\text{eff}^{(h)}=\underset{\tilde{k}^{(h)}(0,|m-n|)<-2}{\min} |m-n|$. That is, the relative positional difference $|m-n|$ such that $\tilde{k}(m,n)\overset{\text{shift-inv.}}{=}\tilde{k}(0,|m-n|)$ just becomes smaller than -2. Note $\tilde{k}(0,|m-n|)$ strictly decreases in $|m-n|$, so there is only one possible value. We pick $-2$ here because $\tilde{k}$ is a bias and is followed by the Softmax normalization. A bias of $-2$ or smaller can make a great impact on the output of Softmax\footnote{Since Softmax is an exponentiated function, a -2 bias in the Softmax's argument roughly gives an attenuation of $\exp(-2)\approx 0.135$.}. $\text{eff}^{(h)}$ is interpreted as the \emph{effective length} because, when $|m-n|<\text{eff}^{(h)}$, the attenuation due to $\tilde{k}^{(h)}$ is not strong. When $|m-n|>\text{eff}^{(h)}$, the attenuation is strong and has a large impact on $\bm q_m^\top \bm k_n+\tilde{k}^{(h)}(m,n)$.
    \item Then, for each $|m-n|\in[0,...,20480]$, we count the number of heads that satisfies $\text{eff}^{(h)}\leq |m-n|$. This gives a cumulative plot as shown in Figure~\ref{fig:cutoff}, where the x-axis is $|m-n|$ and the y-axis is $\text{Count}(\{h:~h\in[1,...,12],~\text{eff}^{(h)}\leq |m-n|\})$.
    \item Repeat the above steps for other datasets and kernels.
\end{enumerate}
\paragraph{Interpretation of Curves.}
For a point $(x,y)$ on a curve, it means that there are $y$ heads with at least $-2$ bias when the token distance $|m-n|$ is greater than $x$. In other words, the slower the $y$ converges to 12, the longer the inter-token range that the model focuses on.

\paragraph{The Advantage of Learnable Parameters.}
We observe that ALiBi \citep{press2022train} produces the same curve no matter which dataset is used. The reason is that ALiBi selects a fixed parameter $r=2^{\frac{-8h}{H}}$ at head $h$ for its linear bias $-r|m-n|$ ($H$ heads in total) regardless of the dataset. While this strategy is useful for extrapolation, we hypothesize that different datasets might have different characteristics, e.g., the average distance of highly related tokens should differ among the datasets as shown in Figure~\ref{fig:cutoff}. These characteristics are easier adapted by learnable parameters. Thus, we believe that learnable parameters have more advantages in capturing the dataset-dependent characteristics.

\begin{figure*}[t]
    \centering
    \caption{\textbf{Number of Heads with Effective Lengths $\pmb{\leq|m-n|}$ for different choices of CPD kernels and datasets.} See section~\ref{sec:analysis} for details.}%
    \subfloat[][\centering Power Variant: $-a|m-n|^p$]{{\includegraphics[width=0.4\textwidth]{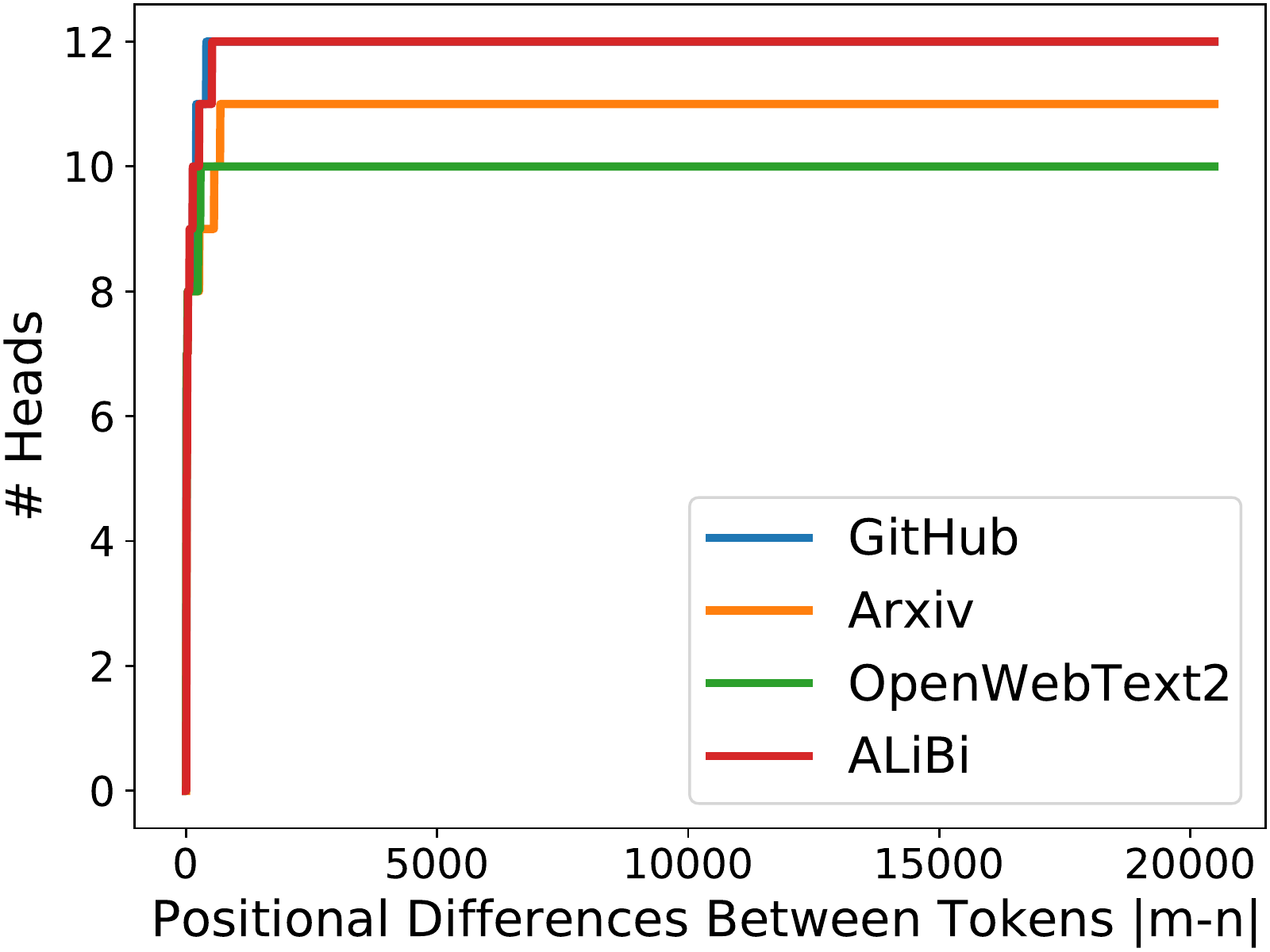} }}%
    \qquad\qquad
    \subfloat[][\centering Logarithmic Variant: $-a\log(1+b|m-n|)$]{{\includegraphics[width=0.4\textwidth]{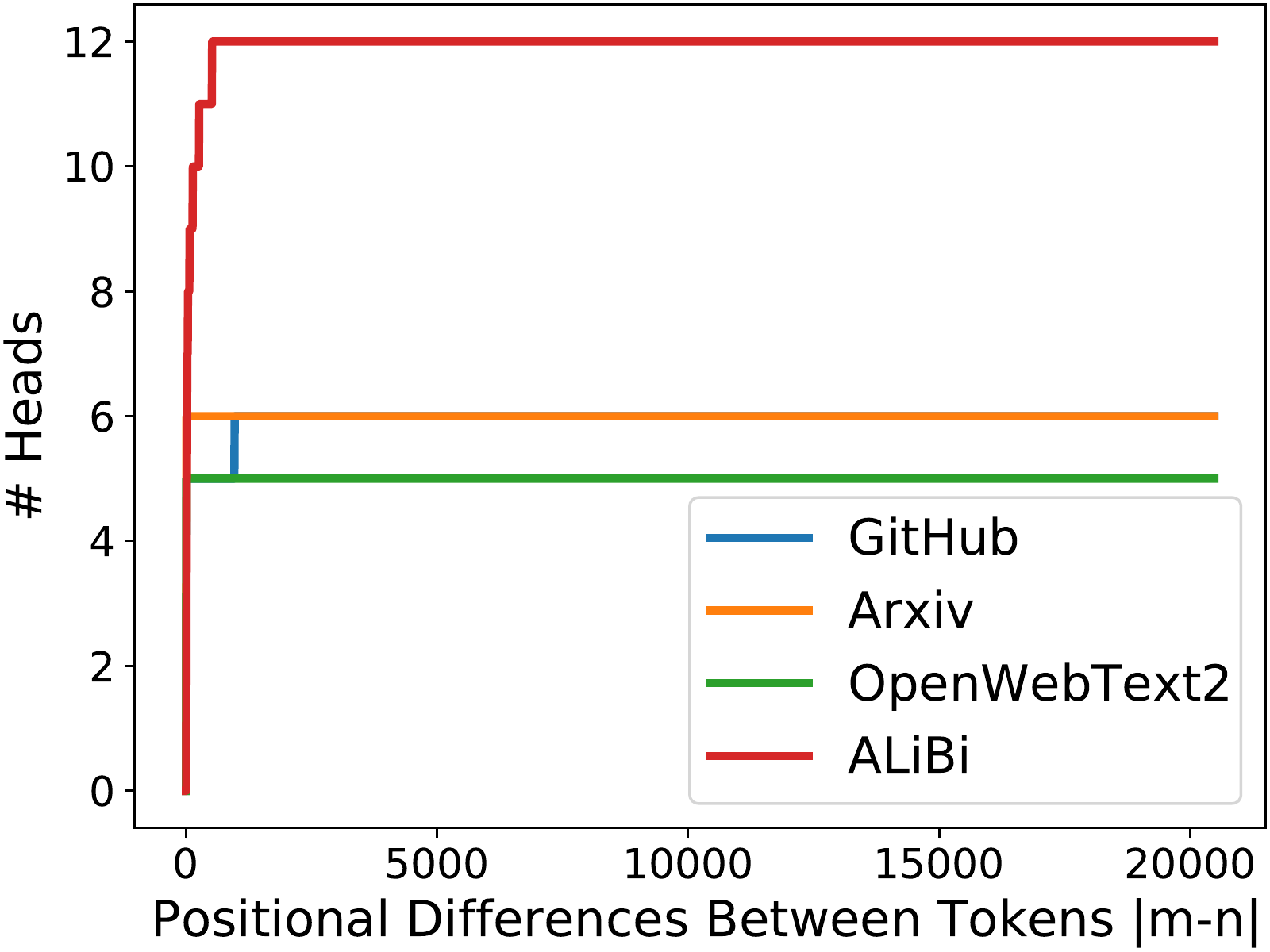} }}%
    \label{fig:cutoff}%
    \vspace{-7mm}
\end{figure*}

\paragraph{Trends Across Datasets.}
We notice that both kernels trained on OpenWebText2 tend to focus more on distant relations. This makes sense because OpenWebText2 has the highest perplexity scores among all datasets, implying that more context is needed to disambiguate the next predicted token. The opposite trend holds for Arxiv and GitHub datasets, which is reasonable considering their lower perplexity scores.

\paragraph{Characteristics Learned by Kernels.}
Under any dataset, the logarithmic variant tends to focus more on distant relations than the power variant does. We can explain it through their functional forms. Because logarithm ($-a\log(1+b|m-n|)$) decays much slower than power ($-a|m-n|^p$) does, the log variant might encourage the focus on distant relations.

\subsection{Position-wise Perplexity for Length=16384}
\label{sec:posppl_16384}
We can draw similar conclusions from Figure~\ref{fig:pos-wise-ppl-window-16384}:
\begin{enumerate}
    \item KERPLE-log lies below KERPLE-log-windowed@512 most of the time, indicating its usage of more distant information than window attention.
    \item The PPL of T5 explodes.
    \item The PPL of ALiBi does not explode, but it is still worse than window attention, i.e. lies above KERPLE-log-windowed@512.
\end{enumerate}
\begin{figure}[!ht]
    \centering   
    \caption{\textbf{Position-wise Perplexity on GitHub at Evaluation Length=16384 Compared to Window Attention@512.}}
    {{\includegraphics[width=\textwidth]{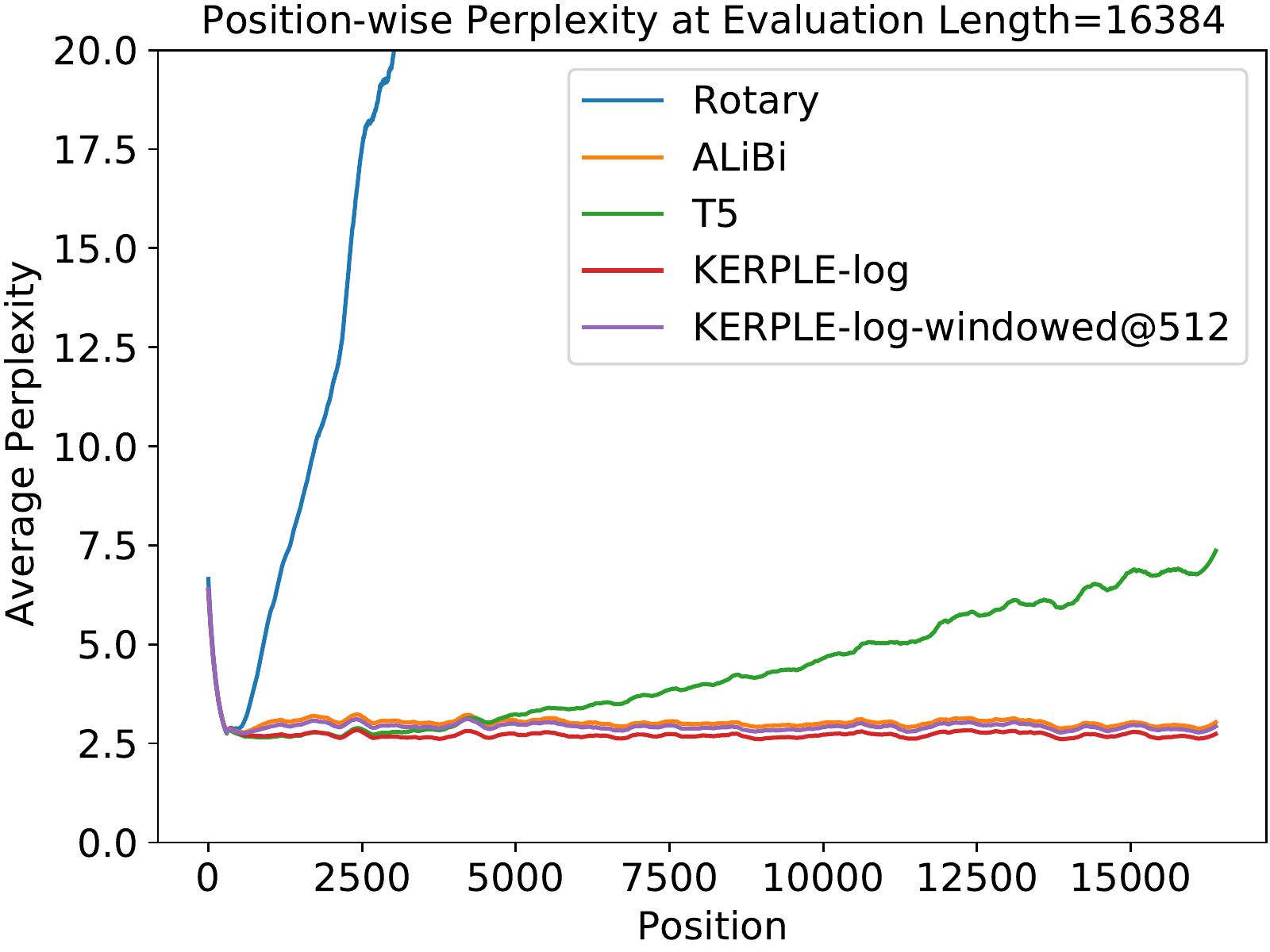} }}%
    \label{fig:pos-wise-ppl-window-16384}
\end{figure}

\subsection{The Choice of codebase and Hyperparameters}
\label{sec:hyper_model}
We adopt almost all the hyperparameters (except batch size to fit in our GPU) and all implementations of the T5 bias, ALiBi, Rotary, and Sinusoidal baselines from the GPT-NeoX codebase. To ensure fair comparisons, we did not fine-tune hyper-parameters for KERPLE.
The datasets we used are exactly the same as the ones released with the GPT-NeoX codebase. We just ran their prepare\_data.py script to automatically download and parse the datasets. All our code was uploaded with the submission on openreview, and \url{https://github.com/EleutherAI/gpt-neox} is the original GitHub repository.
As a side note, we chose this codebase and adopted their parameter settings because it is built by EleutherAI, which is a well-known and truly non-profit group of researchers publishing various famous pretrained models for academia including GPT-J-6B and GPT-NeoX-20B.

\end{document}